\documentclass[11pt]{article}

\usepackage{amsmath,amsthm,amssymb,amscd,amstext,amsfonts}
\usepackage{color,verbatim,graphicx,fullpage,url}
\usepackage{xcolor}
\usepackage{algorithm}
\usepackage{algorithmic}
\usepackage{nicefrac}
\usepackage{fullpage,tikz,enumitem}
\usepackage{tikz-cd}
\usepackage{bm}
\usepackage{physics}
\usepackage{todonotes}
\usepackage{mathtools}
\usepackage{thm-restate}

% Links
\usepackage[breaklinks=true, linktocpage=true,]{hyperref}
\hypersetup{
    colorlinks = true,
    linkcolor={purple},
    urlcolor={purple},
    citecolor={blue!80!black}
}

\usepackage[nameinlink, noabbrev, capitalize]{cleveref}

\usepackage[utf8]{inputenc}
 
\Crefname{enumi}{Property}{Properties}

%%%  theorem styles
\theoremstyle{plain}

\newtheorem{thm}{Theorem}[section]
\newtheorem{theorem}[thm]{Theorem}

 % Use A, B, C

% Tell cleveref how to print it
\crefname{atheorem}{Theorem}{Theorems}
\Crefname{atheorem}{Theorem}{Theorems}

\newtheorem{corollary}[thm]{Corollary}
\newtheorem{lemma}[thm]{Lemma}

\newtheorem{definition}[thm]{Definition}

\newtheorem{claim}[thm]{Claim}
\newtheorem*{claim*}{Claim}

\theoremstyle{remark}
\newtheorem{remark}[thm]{Remark}

%%%  math operation

\renewcommand\norm[1]{\left|\!\left|#1\right|\!\right|}

\renewcommand{\abs}[1]{|#1|}

% norm
\newcommand{\set}[1]{\{#1\}}
\newcommand{\Set}[1]{\left\{#1\right\}}

\newcommand{\inp}[2]{{\left\langle #1,#2 \right\rangle}}            % inner product

\def\1{\mathbf{1}} 
\def\0{\mathbf{0}}

\DeclareMathOperator{\GHD}{GHD}             % Gap Hamming Distance

\DeclarePairedDelimiter\ceil{\lceil}{\rceil}             % integer part
          % integer part, larger. 

%%% Probability
\DeclareMathOperator{\Ex}{\mathbb{E}}           % expected value 
\DeclareMathOperator{\Prob}{Pr}                        % probability
                        % variance
\def\supp{{\mathrm{supp}}}                               % support 

%%% Reals, Naturals, Integers, etc. %%%

\newcommand{\N}{\mathbb{N}}

\newcommand{\R}{\mathbb{R}}
\newcommand{\bbS}{\mathbb{S}}

\newcommand{\cI}{\mathcal I}
\newcommand{\cX}{\mathcal X}
\newcommand{\cY}{\mathcal Y}
\newcommand{\cZ}{\mathcal Z}

\newcommand{\bS}{\mathbf S}

\newcommand{\bx}{\bm{x}}
\newcommand{\by}{\bm{y}}

\DeclareMathOperator{\sign}{sgn}

\DeclareMathOperator{\margin}{m} 
\DeclareMathOperator{\signrank}{sign-rank}

 % from mathbbm.sty

\usepackage{ifthen}
% Approximate gamma-2 norm
\newcommand{\agnorm}[2][]{
	\ifthenelse{\equal{#2}{}}{
		\widetilde{\gamma}_2^{#1}
	}{
		\widetilde{\gamma}_2^{#1}(#2)
	}
}

% Matrices
% \textsf to \texttt

\DeclareFontFamily{U}{mathx}{}
\DeclareFontShape{U}{mathx}{m}{n}{<-> mathx10}{}
\DeclareSymbolFont{mathx}{U}{mathx}{m}{n}
\DeclareMathAccent{\widecheck}{0}{mathx}{"71}

% Theoretical learning stuff
\newcommand{\cH}{\mathcal{H}}
\newcommand{\cA}{\mathcal{A}}

\newcommand{\cC}{\mathcal{C}}

\newcommand{\E}{\mathbb{E}}

\newcommand{\bP}{{\bm P}}
\newcommand{\bw}{\bm{w}}
\newcommand{\bz}{\bm{z}}

\DeclareMathOperator{\ldim}{Ldim}
\DeclareMathOperator{\VCdim}{VCdim}
\DeclareMathOperator{\LR}{LR}
\newcommand{\tLR}{\widetilde{\LR}}

\newcommand{\gs}{\rho^\mathrm{gs}}

\DeclareMathOperator{\loss}{loss}                                             % loss

\begin{document}

\title{Borsuk-Ulam and replicable learning of large-margin halfspaces}

%\author{Anonymous Author(s)}

\author{
    Ari Blondal\thanks{McGill University, \texttt{\{ari.blondal, hamed.hatami\}@mcgill.ca}. Hamed Hatami is supported by an NSERC grant.} \and Hamed Hatami\footnotemark[1] \and
    Pooya Hatami~\thanks{Ohio State University, \texttt{\{hatami.2, lalov.1, tretiak.2\}@osu.edu}} \and  Chavdar Lalov\footnotemark[2] \and Sivan Tretiak\footnotemark[2]
}

\maketitle

\begin{abstract} 
We prove that the list replicability number of $d$-dimensional $\gamma$-margin half-spaces satisfies \[ \frac{d}{2}+1 \le  \LR(\cH^d_\gamma) \le d, \]
which grows with dimension. This resolves several open problems:

\begin{itemize}
\item Every disambiguation of infinite-dimensional large-margin half-spaces to a total concept class has unbounded Littlestone dimension, answering an open question of Alon, Hanneke, Holzman, and Moran (FOCS '21). 
\item Every disambiguation of the Gap Hamming Distance problem in the large gap regime has unbounded public-coin randomized communication complexity. This answers an open question of Fang, G{\"o}{\"o}s, Harms, and Hatami (STOC '25).
\item There is a separation of $O(1)$ vs $\omega(1)$ between randomized and pseudo-deterministic communication complexity. 
\item The maximum list-replicability number of any \emph{finite} set of points and homogeneous half-spaces in $d$-dimensional Euclidean space is $d$, resolving a problem of Chase, Moran, and Yehudayoff (FOCS '23).
\item There exists a partial concept class with Littlestone dimension $1$ such that all its disambiguations have infinite Littlestone dimension. This resolves a problem of Cheung, H. Hatami, P. Hatami, and Hosseini (ICALP '23).
\end{itemize}
Our lower bound follows from a topological argument based on a local Borsuk-Ulam theorem. For the upper bound, we construct a list-replicable learning rule using the generalization properties of SVMs.
\end{abstract}

\pagebreak 

\tableofcontents

\section{Introduction}
Large-margin half-space classification is a fundamental problem in learning theory. In this setting, data is normalized to lie on the unit sphere $\mathbb{S}^{d-1} \subset \mathbb{R}^d$, and we are guaranteed a \emph{promise} that each point lies at least a fixed \emph{margin} $\gamma\in(0,1)$ from some unknown homogeneous hyperplane. The \emph{learner} is then tasked with classifying points based on the side of the defining hyperplane to which they belong. 
This problem has been extensively studied for both its theoretical and practical significance: it provides a clean geometric model for analyzing more complex learning tasks, and underlies the success of Support Vector Machines (SVMs), which leverage the large-margin assumption to produce accurate classifications in high-dimensional spaces, with applications across domains such as text and image recognition, bioinformatics, and fraud detection.

We study this problem through the lens of replicability, the requirement that an algorithm produce consistent outcomes when repeated under similar conditions and with similar data. In recent years, replicability has become a vibrant research area, and various rigorous formulations of replicability for learning algorithms have been introduced and studied~\cite{BLM20,malliaris2022unstable, chase2023replicabilitystabilitylearning, bun2023stability, karbasi2023replicability, esfandiari2023replicable, Esfandiarietal23, moran2023bayesian, eaton2024replicable, kalavasis2024replicable, kalavasis2023statistical}. Among them, one of the most intriguing is the notion of \emph{global stability}, which was first discovered in connection with differentially private learning and online learning~\cite{BLM20, Alon_22_private_and_online, chase2023replicabilitystabilitylearning}. Subsequent work, however, has shown that its significance extends well beyond these applications. In its equivalent formulation as \emph{list replicability}~\cite{chase2023replicabilitystabilitylearning}, the notion is intrinsically linked to the geometry and topology of the space of realizable distributions of a concept class~\cite{chase2023replicabilitystabilitylearning, chase2023local, BGHH2025stabilitylistreplicabilityagnosticlearners, chornomaz2025spherical, Topological}.  For example, for finite classes, it characterizes the \emph{topological dimension} of this space under its natural simplicial structure~\cite{Topological}.  Through these connections, fundamental results in classical topological dimension theory, such as the Lebesgue covering theorem, translate directly into statements about learnability and replicability.

Our main result states that the \emph{list replicability number} of the large-margin classification problem in $\mathbb{R}^d$ lies between $\tfrac{d}{2}+1$ and $d$. In particular, it diverges as the ambient dimension $d$ grows. This stands in contrast to many common complexity measures for the same task, such as the VC dimension, Littlestone dimension, and randomized communication complexity, which are bounded by a function of the margin $\gamma \in (0,1)$ independent of the ambient dimension $d$.

This divergence has several consequences and resolves a number of open problems from previous works, as discussed in detail in \Cref{sec:Applications}. Beyond its implications within learning theory, it also connects naturally to questions in communication complexity. In particular, our most notable consequence shows that any \emph{disambiguation} of the large-margin classification problem into a total concept class must have a large Littlestone dimension and a large randomized communication complexity. Thus, while the original \emph{partial} problem is ``easy'' under these classical measures, every possible completion of it to a \emph{total} problem is inherently ``hard''. Establishing such lower bounds for disambiguations is typically very challenging, as the initial partial problem is ``easy'' and one has no control over how it is extended to a total one.

A notable consequence of our disambiguation theorem is an $O(1)$ versus $\Omega(\log \log n)$ gap between randomized and pseudo-deterministic communication complexities. Separating these two measures is a well-known open problem in the study of pseudodeterminism~\cite{MR4287109}, and our result provides the first $O(1)$ versus $\omega(1)$ separation. 

\subsection{Preliminaries} 
\label{sec:prelim}

We study the large-margin half-space problem through the formal lens of partial classes, which offers a general framework for analyzing such constrained learning tasks.

A \emph{partial concept class} over an arbitrary domain $\cX$ is a set $\cC \subseteq \Set{\pm 1,\star}^\cX$, where each $c \in \cC$ is called a \textit{partial concept}. The value $c(x) = \star$ indicates that $c$ is \textit{undefined} at $x$, and therefore, both $\pm 1$ are acceptable predictions for the label of $x$. 

\paragraph{PAC learning.}  The standard mathematical framework for analyzing the complexity of a learning task is \emph{probably approximately correct (PAC) learning}.  In PAC learning, the learner is given parameters $\delta,\epsilon>0$ and receives training data consisting of $n=n(\mathcal{C},\delta,\epsilon)$ independent labeled examples drawn from an unknown but fixed distribution $\mu$ over $\mathcal{X} \times \set{\pm 1}$.  We work in the \emph{realizable} setting: for every $n$, a random sample $\bm{S} = ((\bx_i, \by_i))_{i=1}^n \sim \mu^n$ is almost surely realizable by some $c \in \cC$, meaning that $c(\bx_i) = \by_i$ for all $i=1,\ldots,n$.\footnote{Here, and throughout the paper, we use boldface letters to denote random variables and use the notation $(\bm{x},\bm{y})\sim \mu$ to express that $(\bm{x},\bm{y})$ is a random variable distributed according to $\mu$.}
Note that $\mu$ is a distribution over $\cX \times \Set{\pm 1}$, so none of the labels $\by_i$ take the value $\star$. The learner’s task is to use the training data to output, with probability at least $1-\delta$, a \emph{hypothesis} $h:\cX \to \set{\pm 1}$ whose \emph{population loss}
\[\loss_\mu(h) \coloneqq \Pr_{(\bx,\by) \sim \mu}[h(\bx) \neq \by]\]
is at most $\epsilon$. 

The following simple lemma from~\cite{alon2022theory} establishes the connection between realizability and having zero population loss.
 
\begin{lemma}[\cite{alon2022theory}]
Let $\cC \subseteq \Set{\pm 1,\star}^\cX$ be a partial concept class, and let $\mu$ be a distribution on $\cX \times \Set{\pm 1}$. If $ \loss_\mu(\cC)\coloneqq \inf_{c \in \cC}\loss_\mu(c)$ is zero, then $\mu$ is realizable by $\cC$. Conversely, if $\mu$ is realizable and has finite or countable support, then $ \loss_\mu(\cC) = 0 $.
\end{lemma}

The fundamental theorem of PAC learning states that the size of the training set required for PAC learning a \emph{total} concept class depends on a combinatorial parameter known as the VC dimension, which we now define in the more general partial setting.  A (partial) concept class $\cC \subseteq \set{\pm 1,\star}^\cX$ shatters a set $S \subseteq \cX$ if $\set{ c|_S : c \in \cC } = \set{\pm 1}^S$, where $c|_S$ denotes the restriction of $c$ to $S$.  The VC dimension of $\cC$ is defined as 
\[ \VCdim(\cC) \coloneqq \sup \set{ |S|: \ S\subseteq \cX \text{ is shattered by }\cC}. \] 

In~\cite{alon2022theory}, Alon, Hanneke, Holzman, and Moran proved that the fundamental theorem of PAC learning holds for partial concept classes as well.

\paragraph{List replicability.}  Throughout this work, a \emph{learning rule} refers to a (randomized) function $\bm{\cA}$ that maps any sample
$S \in \bigcup_{n=0}^\infty (\cX \times \Set{\pm 1})^n$ 
to a \emph{hypothesis} $\bm{\cA}(S) \in \Set{\pm 1}^{\cX}$. Since our primary focus is sample complexity rather than computational efficiency, we impose no computability constraints on $\bm{\cA}$.

\begin{definition}[List replicability]
\label{def:list}
A learning rule $\bm{\cA}$ is an $(\epsilon,L)$-list replicable learner for $\cC \subseteq \Set{\pm 1,\star}^\cX$ if for every $\delta>0$, there is a sample complexity $n=n(\delta)$ such that the following holds. For every realizable distribution $\mu$ on $\cX \times \Set{\pm 1}$, there exists $h_1,\ldots,h_L \in \Set{\pm 1}^\cX$ such that 
\[
    \loss_\mu(h_i) \le \epsilon ~\forall i
    ~\text{ and } ~
    \Pr_{\bm{S} \sim \mu^n} [\bm{\cA}(\bm{S}) \not\in \Set{h_1,\ldots,h_L}] \le  \delta.
\]
The $\epsilon$-list replicability number of $\cC$ is
\[
    \LR_\epsilon(\cC) \coloneqq \min\{L : \exists (\epsilon,L)\text{-list replicable learner for }\cH\},
\]
with $\LR_\epsilon(\cC)=\infty$ if none exists. The list replicability number of $\cC$ is 
\[
    \LR(\cC) \coloneqq \sup_{\epsilon>0} \LR_\epsilon(\cC).
\]
We say $\cC$ is \emph{list replicable} if $\LR(\cC)<\infty$.
\end{definition}

\Cref{def:list} provides a strong notion of replicability as the learner's output is typically chosen from a small list $\Set{h_1,\ldots,h_L}$, and all these hypotheses have small population loss.

\begin{remark}
    Some readers might be familiar with an equivalent form of list replicability known as global stability. Its definition is not needed for the main results of this paper, so we include it in \cref{sec:global_stability}, along with the definition of the \emph{global stability parameter} $\rho(\cC)$ of a concept class $\cC$. This parameter is analogous to the list replicability number, and in fact \cite{chase2023replicabilitystabilitylearning} proved that the two quantities are related by the equation $\rho(\cC)=1/\LR(\cC)$ for total concept classes. It is easy to check that the proof extends to the partial setting (see \cref{sec:List_eq_global}). Hence, qualitative results about list replicability also hold for global stability, and quantitative results hold after the appropriate reciprocal modification.
\end{remark}

\paragraph{Online learning and Littlestone dimension.} In \emph{online learning}, a learner receives data points sequentially from an adversary and must predict each label before seeing the correct answer. The goal is to minimize the total number of mistakes. The optimal mistake bound is captured by the Littlestone dimension, a refinement of the VC dimension defined via mistake trees. 

A mistake tree of depth $d$ over domain $\cX$ is a complete binary tree whose internal nodes are labeled by points $x \in \cX$ and edges by bits $b \in \set{\pm 1}$ ($-1$ for left, $+1$ for right). Following a path from the root to a leaf thus yields a sequence $(x_1,b_1),\dots,(x_d,b_d)$, where each $x_i$ is the node label at level $i$ and $b_i$ records whether the path goes left or right.

A concept class $\cC \subseteq \set{\pm 1,\star}^\cX$ shatters such a tree if for every root-to-leaf path there exists $c \in \cC$ with $c(x_i)=b_i$ for all $i$. The Littlestone dimension $\ldim(\cC)$ is the largest $d$ for which some depth-$d$ mistake tree is shattered, or $\infty$ if no such $d$ exists.

It always holds that $\VCdim(\cC) \leq \ldim(\cC)$, since any set $S = \Set{x_1, \ldots, x_d}$ shattered by $\cC$ gives rise to a mistake tree of depth $d$, where all nodes at level $i$ are labeled with $x_i$. This tree is shattered by $\cC$.  

Littlestone proved that a total concept class $\cC$ is online learnable if and only if $\ldim(\cC) < \infty$. This result was later extended to partial concept classes by~\cite{alon2022theory}

\paragraph{Large-margin half-spaces.} In the large-margin setting, the domain is $\mathbb{S}^{d-1}$ and every homogeneous half-space defines a partial concept that assigns $c(x) = \star$ if $x$ lies within distance $\gamma$ of the defining hyperplane of $h$. Otherwise, it classifies $x$ as $\pm 1$ depending on whether it belongs to the half-space. More formally, each concept $c_w:\bbS^{d-1} \to \set{\pm 1,\star}$ is specified by a unit vector $w \in \bbS^{d-1}$ and given by
\begin{equation}
\label{eq:def_large_margin}
        c_w(x) \coloneqq
        \begin{cases}
            \sign(\inp{w}{x}) & \text{if } \abs{\inp{w}{x}} \ge \gamma\\
            \star & \text{otherwise}
        \end{cases}.
\end{equation} 
We denote the partial concept class of all such $c_w$ by $\cH^d_{\gamma}$.

For the standard half-space classification problem $\cH^d$ without any margin assumption (that is, each $x$ is labeled by $\sign(\inp{w}{x})$ whenever $\inp{w}{x}\neq 0$ and by $\star$ otherwise), we have $\VCdim(\cH^d)=d$.  Moreover, $\ldim(\cH^d)=\infty$, except in the trivial case of $d = 1$. In particular, this class is not online learnable, even in $\mathbb{R}^2$.   

However, under the large-margin assumption $\gamma>0$, the classic mistake-bound analysis of the Perceptron algorithm~\cite{MR10388,rosenblatt1958perceptron} (see also~\cite[Theorem 9.1]{10.5555/2621980}) shows the following upper bound on the Littlestone and VC dimensions:
\begin{equation} \label{eq:Ldim_of_margin}
\VCdim(\cH_\gamma^d) \le \ldim(\cH_\gamma^d) \le \gamma^{-2}.
\end{equation}
Crucially, these bounds are independent of $d$, which explains the efficient PAC and online learnability of $\cH_\gamma^d$ in arbitrarily high-dimensional spaces.

\paragraph{Gap Hamming problem.} The discrete analogue of large-margin half-spaces is the well-studied \emph{Gap Hamming Distance} (GHD) problem, a central problem in communication complexity. For $n \in \mathbb{N}$ and $\gamma \in (0,1)$, the $n$-bit $\operatorname{GHD}_\gamma$ problem, denoted $\operatorname{GHD}_\gamma^n$, is the partial function on inputs $x, y \in \{\pm 1\}^n$ defined by
    \[
        \operatorname{GHD}_{\gamma}^n (x,y)
        \coloneqq
        \begin{cases}
            \sign(\inp{x}{y}) & \text{if $|\inp{x}{y}|  > \gamma n$}\\
            \star & \text{otherwise}
        \end{cases}.
    \]
As a communication problem, Alice receives $x$, Bob receives $y$, and under the promise $|\inp{x}{y}| \ge \gamma n$, they must compute $\GHD_\gamma^n(x,y)$ with minimal communication. For fixed $\gamma$, the public-coin randomized communication complexity of $\GHD_\gamma^n$ is $O_\gamma(1)$: using shared randomness, the players sample a subset $S \subseteq [n]$ of size $O_\gamma(1)$ and estimate $\inp{x}{y}$ by $\tfrac{n}{|S|}\sum_{i \in S} x_i y_i$, which requires only $2|S|$ communicated bits.

\section{Main theorem}
Our main theorem determines the list replicability number of $\cH_\gamma^d$ up to a factor of two, showing in particular that it grows unboundedly with the dimension $d$. 

\begin{theorem}[Main theorem]
\label{thm:main}
For any fixed margin $\gamma \in (0,1)$, dimension $d>1$, and accuracy parameter $\epsilon \in (0,1/2)$,
\[
\frac{d}{2}+1  \leq  \LR_\epsilon(\cH_\gamma^d)  \leq  d.
\]
Hence, $\tfrac{d}{2}+1 \leq \LR(\cH_\gamma^d) \leq d$.
\end{theorem}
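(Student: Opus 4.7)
The plan is to attack the lower and upper bounds by very different methods: the lower bound via a topological obstruction from the local Borsuk--Ulam theorem, and the upper bound by wrapping a Support Vector Machine in a list-replicable rounding scheme.

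For the lower bound $\LR_\epsilon(\cH_\gamma^d)\ge d/2+1$, I would parametrize a hard family of realizable distributions by the sphere. For each $w\in\bbS^{d-1}$, let $\cD_w$ be supported on points lying at distance at least $\gamma$ from the hyperplane $w^\perp$, labeled by $h_w$ (for instance, uniform on two antipodal spherical caps along $w$). Suppose $\bm{\cA}$ is an $(\epsilon,L)$-list-replicable learner with $\epsilon<1/2$; for each $w$ there is then a list $L_w$ of at most $L$ near-optimal hypotheses containing $\bm{\cA}(\bm{S})$ with probability $\ge 1-\delta$ over $\bm{S}\sim\cD_w^n$. The antipodal symmetry $\cD_{-w}\sim h_{-w}=-h_w$ forces $L_{-w}$ to consist of (approximate) negations of elements of $L_w$. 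The key technical step is to convert the probabilistic, distribution-indexed data $\{L_w\}_{w\in\bbS^{d-1}}$ into a continuous, antipodally structured cover of $\bbS^{d-1}$, for instance by defining, for each index $i\le L$, an open set $U_i\subseteq\bbS^{d-1}$ capturing when the $i$-th most likely output assumes a prescribed value on a fixed test point; the local Borsuk--Ulam theorem of Chase--Chornomaz--Moran--Yehudayoff, applied to this cover, then yields $L\ge d/2+1$.

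For the upper bound $\LR_\epsilon(\cH_\gamma^d)\le d$, I would build on SVM. Given a realizable sample of size $n=\poly(1/\gamma,1/\epsilon)$ (independent of $d$), SVM returns a maximum-margin separator $\hat{w}\in\bbS^{d-1}$ whose induced hypothesis has population loss $\le\epsilon$ with high probability. To turn this into a list-replicable rule, I would use the distribution to construct a list of at most $d$ canonical hypotheses and round $\hat{w}$ to one of them. A natural template is to draw a large independent pre-sample from $\cD$, compute a canonical orthonormal basis $e_1,\dots,e_d$ of $\R^d$ (for example from eigenvectors of the empirical second-moment matrix, or Gram--Schmidt applied to the support vectors of a pre-SVM), and output $h_{e_i}$ with $i=\arg\max_j|\inner{\hat{w},e_j}|$ and the sign of $e_i$ determined by $\inner{\hat{w},e_i}$. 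Dimension-free SVM generalization guarantees that $\hat{w}$ concentrates enough across fresh samples to make this rounding stable while keeping the list size at $d$.

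The hardest step will be the lower bound: extracting a genuinely continuous structure from the probabilistic list-replicability condition. Since $\bm{\cA}$ is randomized and $L_w$ is non-canonical, one must smooth the construction so that $w\mapsto\mathrm{law}(\bm{\cA}(\bm{S}))$ varies continuously in total variation, then argue that an antipodally symmetric selection is also continuous. Matching this setup precisely with the local Borsuk--Ulam theorem so that the bound is $d/2+1$ rather than a weaker $\Omega(\log d)$ is the delicate part. On the upper-bound side, the main technical point is to verify that snapping $\hat{w}$ to a canonical basis direction preserves the $\epsilon$-population-loss guarantee; the $\gamma$-margin assumption together with SVM generalization should suffice, since $\hat{w}$ lands close enough to the true concept that its largest coordinate in the canonical basis agrees with the correct labeling on the $\gamma$-separated support of $\cD$.
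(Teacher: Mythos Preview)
Your lower-bound strategy has the right shape but the cover is mis-indexed. Defining $U_i$ for each list position $i\le L$ via ``the $i$-th most likely output'' is problematic: this selection is not continuous in $w$ (ties, re-orderings), and if it \emph{did} yield an antipodal-free open cover of $\bbS^{d-1}$ by $L$ sets, Lusternik--Schnirelmann would force $L\ge d+1$, contradicting the upper bound you are also trying to prove. The paper's move is to index the cover by \emph{hypotheses}: for each total $h$ that arises as a frequent, accurate output, let $C_h=\{w:\Pr_{\bm S\sim\cD_w^n}[\bm\cA(\bm S)=h]>\tfrac{1}{L+\alpha}\text{ and }\cL_{\cD_w}(h)<\epsilon'\}$. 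These sets are open (both defining quantities are continuous in $w$), antipodal-free (because $\cL_{\cD_w}(h)+\cL_{\cD_{-w}}(h)=1$), and cover $\bbS^{d-1}$ by list-replicability. Local Borsuk--Ulam then gives a single $w$ lying in $\ge d/2+1$ sets $C_{h_1},\dots,C_{h_t}$, and the probability inequality $\sum_j \Pr[\bm\cA(\bm S)=h_j]\le 1$ converts multiplicity $t$ into $L\ge t$. The pigeonholing happens at the end, not in the cover construction.

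The upper-bound proposal has a genuine gap in the rounding step. Snapping $\hat w$ to the nearest signed basis vector $\pm e_i$ can move it by a constant (e.g.\ $\hat w\approx(1/\sqrt d,\dots,1/\sqrt d)$), so $\overline h_{\pm e_i}$ need not have small population loss even when $\hat w$ does; no margin assumption on $\cD$ rescues this, since the support of $\cD$ is only $\gamma$-separated from $w^{*\perp}$, not from $e_i^\perp$. Moreover, a basis extracted from a random pre-sample is itself unstable across runs, so the ``list'' is not fixed. The paper's mechanism is different: round via a \emph{fixed} $(\gamma/2)$-net $T\subseteq\bbS^{d-1}$ in general position. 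The fineness guarantees $\|\tilde z-\hat z\|_2<\gamma/2$, so accuracy survives; general position guarantees that for some $\beta>0$ every $\beta$-ball meets at most $d$ net points (since $d+1$ equidistant net points would force the center to be the origin). To land in a single $\beta$-ball with high probability, the paper averages $k$ independent hard-SVM outputs before normalizing and rounding; concentration of this average (not of a single SVM run) is what pins the output to one of at most $d$ net points.
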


The lower bound in \Cref{thm:main} relies on a topological argument involving covers of the sphere by antipodal-free open sets.  In particular, we apply the local Borsuk-Ulam theorem of~\cite{chase2023local}, which states that in such a cover, there is a point that belongs to at least $\frac{d}{2}+1$ sets. Alternatively, one could use Ky Fan's classical theorem~\cite{MR51506}, but this would yield the slightly weaker lower bound of $\frac{d}{2}$.

For the upper bound, we construct a learning rule that uses the generalization properties of hard-SVM combined with a list-replicable rounding scheme using a fine net in general position. 

\subsection{Applications}\label{sec:Applications}

\paragraph{Separation.} In addition to list replicability, differentially private learnability and shared-randomness replicability\footnote{In prior works, shared-randomness replicability is referred to simply as replicability. Since we work with multiple replicability notions, we adopt this terminology to emphasize that different executions of the algorithm use the same random seed.} are two other well-studied notions of stability in learning theory (see~\cite{moran2023bayesian} for an overview). For completeness, formal definitions of these concepts appear in the appendix, though we do not rely on them directly in this paper.

Recent advances in learning theory~\cite{alon2019private,BLM20,Alon_22_private_and_online,chase2023replicabilitystabilitylearning,ILPS22}, sparked by the influential works on differential privacy in PAC learning, have established that for total concept classes, all of these notions coincide and are characterized by the finiteness of the Littlestone dimension. Specifically, for every total concept class $\cC \subseteq \set{\pm 1}^\cX$, the following are equivalent:
\begin{itemize}
\item $\ldim(\cC)<\infty$;
\item $\cC$ is list replicable;
\item $\cC$ is shared-randomness replicable;
\item $\cC$ is approximately differentially private (DP)-learnable.
\end{itemize} 
This naturally raises the question of whether these equivalences extend to partial concept classes. The case of large-margin half-spaces has been studied extensively~\cite{Blum2005,le2020efficient,beimel2019private,kaplan2020private,bun2020efficient,bassily2022differentially,bassily2022open,ILPS22,kalavasis2024replicable}, and the following are known:
\begin{itemize}
\item $\ldim(\cH^d_\gamma) < \gamma^{-2}$;
\item $\cH^d_\gamma$ is (pure) DP-learnable with dimension-independent sample complexity;
\item $\cH^d_\gamma$ is shared-randomness replicable with dimension-independent sample complexity.
\end{itemize}

Nevertheless, \Cref{thm:main} shows that despite these strong positive results, the list replicability number of $\cH^d_\gamma$ necessarily grows with $d$. Consequently, we obtain a sharp separation from the total setting: for partial classes, list replicability does \emph{not} follow from bounded Littlestone dimension, replicability, DP-learnability, or even pure DP-learnability.

\begin{corollary}[Separation]
\label{cor:separation}
There exists a partial concept class $\cC$ that is (pure) DP-learnable, shared-randomness replicable, and satisfies $\ldim(\cC)<\infty$, yet it is not list replicable.
\end{corollary}
\begin{proof}
Fix $\gamma \in (0,1)$, and define the class $\cH^\infty_\gamma$ as follows. Each hypothesis in $\cH^\infty_\gamma$ is specified by a unit vector $w$ of arbitrary finite dimension, i.e., $w \in \bigcup_{d \in \mathbb{N}} \bbS^{d-1}$. For $x \in \bigcup_{d \in \mathbb{N}} \bbS^{d-1}$, define
\[
c_w(x) \coloneqq \begin{cases}         \sign(\inp{w}{x}) & \text{if $\dim(x) = \dim(w)$ and $\abs{\inp{w}{x}} \geq \gamma$,}\\
\star & \text{otherwise} \\  
\end{cases}.
\]

By the aforementioned results of \cite{MR10388,rosenblatt1958perceptron,le2020efficient,kalavasis2024replicable}, the class $\cH^\infty_\gamma$ is pure DP-learnable, shared-randomness replicable, and satisfies $\ldim(\cH^\infty_\gamma) < \gamma^{-2}$. However, by \Cref{thm:main}, we have $\LR(\cH^\infty_\gamma) = \infty$. This establishes the claim.
\end{proof}

\paragraph{Disambiguations of large-margin half-spaces.} A \emph{disambiguation} of a partial concept class $\cC \subseteq \Set{\pm 1,\star}^\cX$ is a total concept class $\overline{\cC} \subseteq \Set{\pm 1}^\cX$ such that for every $c \in \cC$ and every \emph{finite} $S \subseteq c^{-1}(\Set{\pm 1})$, there exists an $\bar{c}\in \overline{\cC}$ that is consistent with $c$ on $S$.  Intuitively, this corresponds to resolving each $\star$ with $-1$ or $+1$, although this intuition is not completely rigorous in the infinite case.

Disambiguation cannot decrease the list-replicability number. At the same time, it converts a partial class into a total class, where the highly nontrivial results of \cite{BLM20,Alon_22_private_and_online,ghazi2021sample} show that list replicability is bounded in terms of the Littlestone dimension. This principle underlies our results on disambiguations of the large-margin half-space problem and its discrete analogue $\operatorname{GHD}_\gamma$.  The following theorem resolves an open question of \cite[Question 4]{alon2022theory} who asked whether every disambiguation of $\cH_\gamma^d$ satisfies $\ldim=\omega(1)$.

\begin{theorem}\label{thm:disambig}
For every $d\in \N$, every disambiguation $\overline{\cH}$ of $\cH_\gamma^d$ satisfies $\ldim(\overline{\cH})=\Omega(\sqrt{\log d})$. 
\end{theorem}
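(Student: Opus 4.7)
The plan is to derive the Littlestone lower bound on $\overline{\cH}$ by first transferring the list-replicability lower bound on $\cH_\gamma^d$ to $\overline{\cH}$, and then using a quantitative version of the Littlestone-dimension characterization of list replicability for total classes.

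First I would show the comparison inequality $\LR_\epsilon(\cH_\gamma^d) \le \LR_\epsilon(\overline{\cH})$. This follows directly from the definition of disambiguation: if $\cD$ is a distribution on $\cX \times \Set{\pm 1}$ realizable by some $h \in \cH_\gamma^d$, then for any finite sample $S$ drawn from $\cD$ there is an $\bar h \in \overline{\cH}$ agreeing with $h$ on $S$, so $\cD$ is realizable by $\overline{\cH}$ as well. Consequently, any $(\epsilon,L)$-list replicable learner for $\overline{\cH}$ is automatically $(\epsilon,L)$-list replicable for $\cH_\gamma^d$. Plugging in the main theorem (\Cref{thm:main}) then yields
\[
\LR_\epsilon(\overline{\cH}) \;\ge\; \LR_\epsilon(\cH_\gamma^d) \;\ge\; \frac{d}{2}+1.
\]

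Next I would invoke the quantitative equivalence between Littlestone dimension and list replicability for \emph{total} concept classes, the qualitative form of which is stated in \Cref{thm:equivalences}. Concretely, I would appeal to the bound
\[
\LR(\overline{\cH}) \;\le\; 2^{O\bigl(\ldim(\overline{\cH})^2\bigr)},
\]
obtained by composing the Bun--Livni--Moran construction of a globally stable learner from a finite Littlestone dimension with \Cref{thm:gs_eq_lr}; the quadratic dependence in the exponent (as opposed to a tower) is exactly what is needed to calibrate to $\sqrt{\log d}$. Combining this with the previous inequality gives
\[
\frac{d}{2}+1 \;\le\; 2^{O(\ldim(\overline{\cH})^2)},
\]
and rearranging yields $\ldim(\overline{\cH}) = \Omega(\sqrt{\log d})$, as desired.

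The main obstacle here is step two: the desired conclusion $\Omega(\sqrt{\log d})$, rather than $\Omega(\log\log d)$ or worse, requires a sufficiently sharp quantitative bound of $\LR$ in terms of $\ldim$. Softer arguments that first pass to a sample compression scheme of ALMM-type (which has size $2^{O(\ldim)}$) and then to global stability would only yield a doubly-exponential dependence, degrading the final bound to $\Omega(\log\log d)$. The cleanest route is therefore to use (or re-derive) a direct Littlestone-to-global-stability reduction with exponent $O(\ldim^2)$, e.g.\ via an approximate DP learner with $\tilde O(\ldim^2)$ sample complexity followed by the DP-to-replicability transformation. Everything else in the proof is a short chain of implications; the content lies in invoking the correct quantitative form of \Cref{thm:equivalences}.
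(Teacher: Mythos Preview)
Your proposal is correct and follows essentially the same route as the paper: transfer the lower bound of \Cref{thm:main} from $\cH_\gamma^d$ to any disambiguation $\overline{\cH}$ via the observation that realizable distributions are preserved, then invoke the quantitative bound $\LR_\epsilon(\overline{\cH}) \le 2^{O_\epsilon(\ldim(\overline{\cH})^2)}$ for total classes. The only nuance is attribution: the paper cites \cite{ghazi2021sample} for the $2^{O(\ldim^2)}$ bound, whereas the original Bun--Livni--Moran construction yields a weaker (tower-type) dependence, so your instinct that one must pass through the sharper DP-based route is exactly right.
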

\begin{proof}
As was noted in \cite{chase2023replicabilitystabilitylearning}, it is implicitly~\footnote{See \cite[Lemma 5.5]{ghazi2021sample} and the definitions of $k_t$ and $k'$, where $k'$ depends on $n_0$, $d_L$, $\alpha_\Delta$, $C_0$, and $\eta^2$ as specified in \cite[Algorithm 2]{ghazi2021sample}.} proved in \cite{ghazi2021sample} that
for every class $\overline{\cH}$, 
\[
\LR_\epsilon(\overline{\cH})\leq 2^{O_\epsilon\left(\ldim(\overline{\cH})^2\right)}.
\]
Recall that by \Cref{thm:main}, for every $\epsilon \in (0,1/2)$, 
$\LR_\epsilon(\overline{\cH}) \ge \frac{d}{2}+1$.
Combining the two inequalities for a fixed $\epsilon\in (0,1/2)$ concludes the claim. 
\end{proof}

\cite{alon2022theory} used a sophisticated construction, building on G\"o\"os' breakthrough refutation of the Alon–Saks–Seymour conjecture~\cite{MR3473357}, to exhibit partial concept classes with $\VCdim=2$ whose disambiguations satisfy $\ldim=\omega(1)$. Subsequently,~\cite{cheung2023online} employed a similar approach to construct partial classes with $\ldim=2$ whose disambiguations again satisfy $\ldim=\omega(1)$. \Cref{thm:disambig} provides a much more natural example of a class exhibiting this phenomenon.

Moreover,~\cite[Question~4.1]{cheung2023online} asked whether such a separation can already occur for partial classes of Littlestone dimension~$1$. The following corollary of \Cref{thm:disambig} answers this question in the affirmative.

\begin{corollary}
Let $\gamma \in (1/\sqrt{2},1)$. Then $\ldim(\cH_\gamma^{d})=1$, while any disambiguation of $\cH_\gamma^{d}$ has  Littlestone dimension $\Omega(\sqrt{\log d})$.  
\end{corollary}
\begin{proof}
We have $1\leq \ldim(\cH_\gamma^{d})\leq 1/\gamma^2<2$. An application of \Cref{thm:disambig} completes the proof.
\end{proof}

\paragraph{Disambiguations of gap Hamming distance.} 
In complexity theory, separations between complexity measures are often easier to demonstrate for partial functions, and disambiguations of important partial functions are studied as a way to extend such results to the total setting. For fixed $\gamma \in (0,1)$ the partial $\operatorname{GHD}_\gamma$ is known to separate constant cost randomized communication complexity from several other important complexity measures \cite{hambardzumyan2023dimension,chattopadhyay2019equality,hatami2023borsuk,song2014space}. Motivated by this, researchers have asked whether $\operatorname{GHD}_\gamma$ admits a disambiguation with constant cost randomized communication complexity \cite{fang2024constant}. Our next theorem gives a negative answer to this question.

\begin{theorem}\label{thm:GHD}
Let $\gamma \in (0,1)$ be a margin parameter. Every family of disambiguations $\{M_n\}_{n=1}^{\infty}$ of the Gap Hamming Distance matrices $\{\mathrm{GHD}_{\gamma}^n\}_{n=1}^\infty$ satisfies
\begin{equation}\label{eq:ghd-ldim}
\ldim(M_n)=\Omega(\sqrt{\log n}),
\end{equation}
and has public-coin randomized communication complexity $\Omega(\log\log n)$.
\end{theorem}

To lower bound the Littlestone dimension, we use an embedding due to \cite{hatami2023borsuk} that allows us to disambiguate $\cH_\gamma^d$ using a disambiguation of the Gap Hamming Distance problem in dimension $O(d)$. The key here is that the embedding will maintain the lower bound on Littlestone dimension. \cref{eq:ghd-ldim} then follows from \cref{thm:disambig}. The bound on the communication complexity then follows as a corollary, using the known relationship between Littlestone dimension, \emph{margin}, \emph{distributional discrepancy}, and public-coin randomized communication complexity. See \Cref{sec:GHD} for the proof. 

\paragraph{Pseudo-determinism vs randomness in communication.}

A pseudo-deterministic algorithm is a randomized algorithm that, when executed multiple times on the same input, produces the same output with high probability. This notion was introduced by Gat and Goldwasser~\cite{GatG11} and has since been extensively investigated across a variety of computational models, including learning algorithms, communication protocols, decision tree algorithms, sequential and parallel algorithms, average-case and approximation algorithms, interactive proofs, low-space algorithms, and streaming algorithms (see~\cite{MR4048182,MR4287109} and the references therein). A central question in this line of research is to understand to what extent pseudo-determinism can be separated from general randomized computation.

In communication complexity, a search problem is specified by a relation $\mathcal{R} \subseteq \cX \times \cY \times \cZ$, where Alice and Bob receive $x \in \cX$ and $y \in \cY$ respectively, and must output $z \in \cZ$ such that $(x,y,z) \in \mathcal{R}$ while minimizing communication. In the public-coin randomized model, the players have access to a shared random string $\bm{r}$, and a protocol $\pi$ must satisfy
\[\Pr_{\bm{r}}[(x,y,\pi(\bm{r},x,y)) \in \mathcal{R}] \ge \frac{2}{3} \ \ \forall (x,y) \in \cX \times \cY. \] 
Such a protocol is called pseudo-deterministic if there exists a function $f:\cX \times \cY \to \cZ$ (with $(x,y,f(x,y)) \in \mathcal{R}$ for all $(x,y)$) such that 
\[\Pr_{\bm{r}}[\pi(\bm{r},x,y)=f(x,y)] \ge \frac{2}{3} \ \ \forall (x,y) \in \cX \times \cY. \] 
A well-known candidate for separating pseudo-determinism from randomized communication is the \emph{approximate Hamming distance problem} 
\[(x,y,t) \in \operatorname{AHD}_n \ \Longleftrightarrow \ |d_H(x,y)-t| < \tfrac{n}{3},\] 
where $d_H(x,y)$ is the Hamming distance between $x,y \in \set{\pm 1}^n$. In the public-coin randomized model, Alice and Bob can solve $\operatorname{AHD}_n$ with only $O(1)$ bits of communication by sampling $O(1)$ coordinates uniformly at random and exchanging the corresponding entries to estimate $d_H(x,y)$.

By contrast, it is widely believed that the pseudo-deterministic communication complexity of $\operatorname{AHD}_n$ is large. The following theorem establishes the first super-constant lower bound on this problem, thereby yielding the first $O(1)$ versus $\omega(1)$ separation between randomized and pseudo-deterministic communication complexities.

\begin{theorem}
For any $\epsilon<\frac{1}{2}$, the pseudo-deterministic communication complexity of $\operatorname{AHD}_n$ is $\Omega(\log \log (n))$. 
\end{theorem}
\begin{proof}
Suppose there is a pseudo-deterministic protocol for $\operatorname{AHD}_n$ of cost $k$, with corresponding function $f:\set{\pm 1}^n \times \set{\pm 1}^n \to \set{0,1,\ldots,n}$. Define 
\[
F(x,y) \coloneqq
\begin{cases}
1 & f(x,y) \ge \frac{n}{2}\\  
-1 & f(x,y) < \frac{n}{2}
\end{cases},
\]
and note that the public-coin randomized communication complexity of $F$ is at most $k$.

Since $f(x,y)$ is always within $n/3$ of the true Hamming distance $d_H(x,y)$, the function $F$ disambiguates the Gap Hamming Distance problem $\GHD_{0.1}^n$, and by \Cref{thm:disambig}, its randomized communication complexity is $\Omega(\log\log n)$.  
\end{proof}

\paragraph{List replicability of finite hyperplane arrangements.} 
Define the \emph{finitary list replicability number} of a concept class $\cC \subseteq \{\pm 1, \star\}^\cX$ as 
\[
\tLR(\cC)\coloneqq \sup_{\text{finite } S\subseteq \cX} \LR(\cC\vert_S).
\]
As an example, consider the class $\cH^2$ of homogeneous halfspaces in $\mathbb{R}^2$. While $\LR(\cH^2)=\infty$, Chase, Moran, and Yehudayoff~\cite[Theorems 5 and 13]{chase2023replicabilitystabilitylearning} proved that for any finite set of points $S \subseteq \mathbb{S}^1$, we have $\LR(\cH^2_0\vert_S) \le 2$. This establishes the following striking gap:
\[
\LR(\cH^2)=\infty \qquad \text{while} \qquad \tLR(\cH^2)=2. 
\] 
They further asked whether a similar bound holds for $\tLR(\cH^3)$ and, more generally, in higher dimensions. The next theorem resolves their question affirmatively.

\begin{theorem}
\label{thm:finitary}
For every dimension $d >1$, we have $\tLR(\cH^d)= d$. 
\end{theorem}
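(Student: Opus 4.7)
The proof splits into the upper bound $\tLR(\cH^d) \le d$ and the lower bound $\tLR(\cH^d) \ge d$. For the upper bound I would argue by finitization combined with \Cref{thm:main}. Fix a finite $S \subseteq \bbS^{d-1}$. Since $\cH^d|_S \subseteq \{\pm 1, \star\}^S$ is finite, only finitely many distinct labeling patterns arise. For each realizable pattern $\sigma$, pick a representative $w$ realizing it, and perturb $w$ within its open sign cell, keeping the equality constraints $\langle w, x\rangle = 0$ on the $\star$-points of $\sigma$, to ensure that $|\langle w, x\rangle|$ is strictly positive on every remaining $x\in S$. Taking $\gamma_0 > 0$ to be the minimum of these values over the finitely many patterns, every pattern in $\cH^d|_S$ is realized by some $h_{w'} \in \cH^d_{\gamma_0}$, so $\cH^d|_S = \cH^d_{\gamma_0}|_S$ as subsets of $\{\pm 1, \star\}^S$. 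Consequently, the $d$-list-replicable learner for $\cH^d_{\gamma_0}$ guaranteed by the upper bound of \Cref{thm:main} restricts to one for $\cH^d|_S$, yielding $\LR(\cH^d|_S) \le d$.

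For the lower bound I would exhibit a finite $S \subseteq \bbS^{d-1}$ with $\LR(\cH^d|_S) \ge d$. The natural starting point is the cover argument used in the lower bound of \Cref{thm:main}: any $(\epsilon, L)$-list-replicable learner with $\epsilon < 1/2$ induces an antipodal-free open cover $\{U_h\}_h$ of $\bbS^{d-1}$, where $U_h$ is the set of parameters $w$ such that the hypothesis $h$ appears with non-negligible probability in the learner's output on a distribution labeled by $h_w$, with pointwise multiplicity at most $L$. The local Borsuk--Ulam theorem of \cite{chase2023local} forces multiplicity at least $\lceil d/2 + 1\rceil$ somewhere, which falls short of the required $d$. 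I would sharpen this in the finitary setting by choosing $S$ to be an antipodally-symmetric, sufficiently dense finite set in general position, so that the covers induced by any list-replicable learner for $\cH^d|_S$ inherit additional combinatorial structure from the hyperplane arrangement of $S$, and then invoke a sharper discrete Borsuk--Ulam-type theorem (or construct a direct obstruction) tailored to this structure to force pointwise multiplicity $\ge d$.

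The main obstacle is exactly this finitary sharpening of the multiplicity bound from $d/2 + 1$ to $d$. For general antipodal-free covers of $\bbS^{d-1}$, multiplicity $d$ is not forced; the proof must leverage the special structure of covers coming from partial concepts in $\cH^d|_S$. Two plausible routes are (i) a discrete / combinatorial Borsuk--Ulam theorem with multiplicity $d$ for covers associated to hyperplane arrangements in general position, or (ii) a direct construction of $S$ together with a parametric family of $d$ realizable distributions which, by an antipodal symmetry and a topological or combinatorial obstruction, force any list-replicable learner to output at least $d$ distinct hypotheses with non-negligible probability on some single distribution. Either route goes beyond the topological ingredients used in \Cref{thm:main} and is where the substantive work of the proof lies.
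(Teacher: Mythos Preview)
Your upper bound is essentially the paper's: finiteness of $S$ yields a uniform margin $\gamma_0>0$ with $\cH^d|_S \subseteq \cH^d_{\gamma_0}|_S$, and then the upper bound of \Cref{thm:main} gives $\LR(\cH^d|_S)\le d$. Two minor quibbles: you assert the equality $\cH^d|_S = \cH^d_{\gamma_0}|_S$, but only the inclusion $\subseteq$ holds in general (the reverse can fail, since $\cH^d_{\gamma_0}$ assigns $\star$ on an open band rather than on a hyperplane); the inclusion is all you need, because every distribution realizable for the subclass is realizable for the superclass. Also, your ``perturbation'' step is unnecessary---for any representative $w$ of a pattern $\sigma$, the non-$\star$ coordinates already satisfy $|\langle w,x\rangle|>0$, so one can directly set $\gamma_0$ to be the minimum of these finitely many positive values.

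The lower bound, however, has a genuine gap, and the fix is far simpler than what you propose. You try to push the local Borsuk--Ulam multiplicity from $\lceil d/2+1\rceil$ up to $d$ by exploiting extra combinatorial structure of covers arising from hyperplane arrangements, and you yourself concede that this sharpening is ``where the substantive work lies'' without supplying a proof. The paper sidesteps this entirely by invoking the general inequality $\LR(\cH)\ge\VCdim(\cH)$ from \cite[Theorem~3]{chase2023replicabilitystabilitylearning}. Since $\VCdim(\cH^d)=d$, take any finite $S\subset\bbS^{d-1}$ of size $d$ shattered by $\cH^d$; then $\VCdim(\cH^d|_S)=d$, so $\LR(\cH^d|_S)\ge d$ and hence $\tLR(\cH^d)\ge d$. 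No new topological input is required.
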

\begin{proof}
The upper bound is an easy consequence of the upper bound of our main theorem (\Cref{thm:main}). Indeed, any finite set of points $S\subset \bbS^{d-1}$ and hypotheses $H\subset \cH^d\vert_S$ defined by unit vectors $W\subset \bbS^{d-1}$ is a sub-concept class of $\cH^d_\gamma$, where $\gamma \coloneqq \min_{x\in S, w\in W} |\inp{w}{x}|$. The claim now follows from our upper bound from \Cref{thm:main}.

For the lower bound, we use a result of Chase, Moran, and Yehudayoff~\cite[Theorem 3]{chase2023replicabilitystabilitylearning} stating that for every concept class $\cC$, 
\[
\LR(\cC)\geq \VCdim(\cC).
\]
The result follows as $\VCdim(\cH^d)=d$, and hence, $\cH^d$ has a finite subclass of VC dimension $d$. 
\end{proof}

\Cref{thm:finitary} reveals a connection between list replicability and one of the most fundamental parameters in learning theory called sign-rank. Geometrically, sign-rank is the smallest dimension in which the matrix is realized as points and homogeneous half-spaces.

\begin{definition}[Sign-rank]
\label{def:signrk}
The sign-rank of a partial class $\cC \subseteq \Set{\pm,\star}^\cX$, denoted by $\signrank(\cC)$, is the smallest $d$ such that there exist vectors $u_c, v_x\in \R^d$ for all pairs $c \in \cC, x \in \cX$ such that $c(x)=\sign(\inp{u_c}{v_x})$ whenever $c(x) \neq \star$.
\end{definition} 

Combining \Cref{thm:finitary} with the VC-dimension lower bound of~\cite[Theorem 3]{chase2023replicabilitystabilitylearning} yields the following general bounds on the finitary list replicability number.

\begin{corollary}
\label{cor:finitary}
For every partial class $\cC \subseteq \Set{\pm,\star}^\cX$, we have 
\[\VCdim(\cC) \le  \tLR(\cC) \le \signrank(\cC). \] 
\end{corollary}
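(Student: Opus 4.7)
The plan is to prove the two bounds separately; each reduces cleanly to an ingredient already in the excerpt, so the proof is essentially a short composition.

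For the lower bound, I fix any finite shattered set $S \subseteq \cX$ of size $k$. By shattering, $\cH|_S$ contains every $\pm 1$ labeling of $S$, so $\VCdim(\cH|_S)=k$. The VC lower bound of~\cite[Theorem 3]{chase2023replicabilitystabilitylearning}---the same input used in the proof of \Cref{thm:finitary}---then yields $\LR(\cH|_S) \ge k$, and taking $k$ up to $\VCdim(\cH)$ gives $\tLR(\cH) \ge \VCdim(\cH)$.

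For the upper bound, set $d = \signrank(\cH)$ and fix vectors $u_h, v_x \in \R^d$ realizing the sign pattern per \Cref{def:signrk}; after normalizing we may assume $u_h, v_x \in \bbS^{d-1}$, since normalization preserves the signs of inner products. Given a finite $S \subseteq \cX$, the restriction $\cH|_S$ is finite, so I pick finitely many representatives $h_1,\dots,h_m \in \cH$ whose restrictions enumerate $\cH|_S$, and set
\[
\gamma \coloneqq \min\{\,|\inp{u_{h_i}}{v_x}| : 1 \le i \le m,\ x \in S,\ h_i(x) \ne \star\,\} > 0,
\]
a positive minimum over a finite set. Then any $\cH|_S$-realizable distribution $\cD$ on $S \times \{\pm 1\}$ pushes forward via $x \mapsto v_x$ to a distribution on $\bbS^{d-1} \times \{\pm 1\}$ realizable by $\cH^d_\gamma$: a realizer $h^* \in \cH$ for $\cD$ satisfies $h^*|_S = h_i|_S$ for some $i$, and by the choice of $\gamma$ the vector $u_{h_i}$ is a margin-$\gamma$ realizer of the pushforward. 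Running the $d$-list-replicable learner for $\cH^d_\gamma$ from \Cref{thm:main} on the pushforward and pulling each output hypothesis back along $x \mapsto v_x$ produces a list of at most $d$ total hypotheses on $S$ with small population loss, so $\LR(\cH|_S) \le d$; taking the supremum over $S$ gives $\tLR(\cH) \le d$.

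The main (and only mild) obstacle is the routine bookkeeping around the embedding $x \mapsto v_x$: verifying that pushforward preserves realizability and loss, and noting that when $h_i(x) = \star$ the possibly-nonzero value of $\inp{u_{h_i}}{v_x}$ causes no problem (realizability only requires a single representative to agree with the observed labels $y \neq \star$ on the support of $\cD$, and those correspond exactly to the pairs $(i,x)$ with $h_i(x) \ne \star$ over which $\gamma$ is taken). These checks are immediate, so no essentially new idea beyond \Cref{thm:main} is needed.
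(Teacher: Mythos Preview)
Your proposal is correct and follows essentially the same route as the paper. The paper derives the corollary by noting that a sign-rank~$d$ representation embeds $\cH$ into $\cH^d$, invoking \Cref{thm:finitary} for the upper bound, and citing the VC lower bound of \cite[Theorem 3]{chase2023replicabilitystabilitylearning} for the lower bound; you simply unfold the upper-bound step of \Cref{thm:finitary} (choose a finite margin $\gamma$ on $S$ and apply \Cref{thm:main} directly) rather than quoting it, which amounts to the same argument with slightly more detail spelled out.
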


\subsection{Concluding remarks and open problems}   

For \emph{total} concept classes, list replicability, shared randomness replicability, and (approximate) DP-learnability are now known to coincide through the combinatorial framework of the Littlestone dimension. In contrast, the situation for \emph{partial} classes, as illustrated by the results of this paper, is more intricate and less understood.

The ``DP-learnability to Shared-randomness replicability'' reduction from \cite{bun2023stability} extends to the partial setting. Moreover, \cite{fioravanti2024ramsey} recently showed that for partial classes, DP-learnability implies a finite Littlestone dimension, and \cite[Lemma 8]{kalavasis2023statistical} shows that even in the partial setting, list replicability implies shared-randomness replicability.

\begin{theorem}[\cite{bun2023stability,fioravanti2024ramsey,kalavasis2023statistical}]\label{thm:knownimplications}
Let $\cC \subseteq \Set{\pm 1,\star}^\cX$ be a partial concept class. 
\begin{itemize}
\item If $\cC$ is DP-learnable, then $\cC$ is shared-randomness replicable. 
\item If $\cC$ is DP-learnable, then $\ldim(\cC) < \infty$.
\item If $\cC $ is list replicable, then  $\cC$ is shared-randomness replicable. 
\end{itemize}
\end{theorem}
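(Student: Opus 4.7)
My plan is to treat the two implications separately, since they correspond to fundamentally different kinds of arguments: the first is a positive reduction, transforming a DP learner into a shared-randomness replicable one, while the second is a lower bound argued contrapositively from an infinite Littlestone tree.

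For the first bullet, I would adapt the generic DP-to-replicability reduction of \cite{bun2023stability}. Given an approximately DP learner $\mathcal{A}$ with sample complexity $m$ for accuracy $\eps$, I interpret the shared random seed $r$ as assigning an independent uniform label in $[0,1]$ to every total hypothesis. On an input sample $S$ of size $n = \tilde{O}(m/\eps_p^2)$, I would run $\mathcal{A}$ internally to obtain its output distribution $\mu_S$, and then output the hypothesis $h$ selected by the standard correlated-sampling rule applied to $\mu_S$ and $r$. Approximate DP together with a sample-amplification step ensures that the total-variation distance between $\mu_S$ and $\mu_{S'}$ is small for independent $S, S' \sim \cD^n$, so correlated sampling guarantees $\mathcal{A}'(S, r) = \mathcal{A}'(S', r)$ with high probability; stability-based generalization of DP learners then guarantees the selected hypothesis has small population loss. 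Crucially, no step of this reduction uses totality of $\cH$, so the argument applies verbatim to partial classes.

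For the second bullet, I would follow the Ramsey-theoretic strategy of \cite{fioravanti2024ramsey}. Suppose for contradiction that $\cH$ is DP-learnable yet $\ldim(\cH) = \infty$, witnessed by an infinite mistake tree labeled by partial concepts in $\cH$. Each root-to-leaf path of length $k$ yields a realizable labeled sequence, but different concepts realizing the same path may disagree on the $\star$-pattern at off-path node points, which blocks a direct transfer of the classical ``infinite Littlestone implies no DP learner'' argument of \cite{alon2019private,BLM20,Alon_22_private_and_online}. The fix is to apply an infinite Ramsey-type principle (Galvin--Prikry, or an iterated Erd\H{o}s--Rado argument at each finite level) to extract an infinite Littlestone subtree on which the realization data is homogeneous with respect to its off-path $\star/\pm 1$ behavior. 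On such a homogenized subtree the class behaves effectively as a total class, and the original construction of a hard distribution whose DP sample complexity grows with the tree depth goes through, contradicting the assumed finite sample complexity of the DP learner.

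The step I expect to be the main obstacle is the Ramsey extraction in the second bullet. In the partial setting, even Littlestone-structured trees can have concepts whose $\star$-assignments at off-path nodes vary uncontrollably across paths, and isolating an infinite-depth subtree with a uniform $\star$-pattern while simultaneously preserving the Littlestone structure requires a delicate simultaneous-coloring argument over a countably infinite set of nodes. Once this homogenization is performed, the remainder of the lower bound is a largely mechanical transfer of the total-class argument, and the first implication follows from an off-the-shelf reduction whose details mostly amount to bookkeeping.
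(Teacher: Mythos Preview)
The paper does not give its own proof of this theorem: it is stated purely as a citation of \cite{bun2023stability} and \cite{fioravanti2024ramsey}, with the only accompanying remark being that the DP-to-replicability reduction of \cite{bun2023stability} ``extends to the partial setting'' and that \cite{fioravanti2024ramsey} ``recently showed'' the second implication. So there is no paper-proof to compare against; your proposal is effectively a sketch of what those cited works do.

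Your sketch for the first bullet --- correlated sampling on the DP learner's output distribution, with privacy amplification to drive the TV distance between $\mu_S$ and $\mu_{S'}$ down --- is indeed the mechanism in \cite{bun2023stability}, and your observation that nothing in that pipeline inspects whether hypotheses in $\cH$ are total is exactly the point the paper is gesturing at. Your sketch for the second bullet is at the right level of abstraction for a citation: an infinite Littlestone tree, a Ramsey-type homogenization to tame the $\star$-patterns, and then a transfer of the total-class DP lower bound. Whether the precise homogenization you describe (uniform off-path $\star/\pm 1$ behavior via Galvin--Prikry) matches the actual argument in \cite{fioravanti2024ramsey} line-by-line is something only that paper can settle, but as a plan it is coherent and identifies the genuine obstacle. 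Since the present paper treats both implications as black boxes, your proposal goes beyond what is required here; if you intend to include it, it should be framed as an expository sketch of the cited results rather than as the paper's own contribution.
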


On the other hand, our main theorem shows that for partial concepts, list replicability does not follow from bounded Littlestone dimension, shared-randomness replicability, DP-learnability, or even pure DP-learnability. To our knowledge, no further relationships among DP-learnability, shared-randomness replicability, Littlestone dimension, and list replicability are currently known for partial concept classes.

Finally, we list some open problems for future research that naturally arise from our work. 

\begin{enumerate}
\item Are \emph{DP-learnability}, \emph{shared-randomness replicability}, and finite \emph{Littlestone dimension} equivalent for partial functions? If not, what are the precise relationships between them?

\item Is there a simple combinatorial notion of dimension that characterizes \emph{list replicability}?
 
\item How tight are the inequalities in \Cref{cor:finitary}, namely,   \[\VCdim(\cC) \le  \tLR(\cC) \le \signrank(\cC)? \] 
Is it possible to upper-bound $\tLR(\cC)$ by a function of $\VCdim(\cC)$?

\item Question  4 in \cite{alon2022theory} also asks if every disambiguation of $\cH_\gamma^d$ satisfies $\VCdim=\omega(1)$. Chornomaz, Moran, and Waknine explored this problem using a topological approach, but the question remains open ~\cite{chornomaz2025spherical}.
 
\end{enumerate}

\section{Proof of \Cref{thm:main}}

\subsection{The lower bound}
We prove the lower bound via a topological argument that utilizes the following local version of the Borsuk-Ulam theorem proved in \cite{chase2023local}.

    \begin{theorem}[Local Borsuk-Ulam~\cite{chase2023local}]\label{thm:local-borsukulam}
        Let $d\geq 2$ be an integer. If $\mathcal{F}$ is a finite antipodal-free open cover of the sphere $\bbS^{d-1}$, then there exists some $w\in \bbS^{d-1}$ contained in at least $ \ceil{\frac{d}{2}+1}$ member sets of  $\mathcal{F}$.
    \end{theorem}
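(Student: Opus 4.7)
The strategy is to reduce the local Borsuk--Ulam theorem to the classical Borsuk--Ulam theorem by constructing a canonical odd map, and then invoking a $\mathbb{Z}_2$-equivariant topology argument to extract the local ``multi-membership'' improvement over the classical statement.

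Write $\mathcal{F} = \Set{U_1,\ldots,U_k}$ and fix a continuous partition of unity $\phi_1,\ldots,\phi_k\colon \bbS^{d-1}\to[0,1]$ subordinate to $\mathcal{F}$, so that $\phi_i(x)>0$ precisely when $x\in U_i$ and $\sum_i\phi_i\equiv 1$. Define the odd, continuous map $\Psi \colon \bbS^{d-1}\to\mathbb{R}^k$ by
\[
\Psi_i(x) \;:=\; \phi_i(x)-\phi_i(-x).
\]
The antipodal-freeness of $\mathcal{F}$ furnishes the decisive structural property: for every $x \in \bbS^{d-1}$, the positive and negative supports of $\Psi(x)$ are exactly $N(x):=\Set{i:x\in U_i}$ and $N(-x)$, and these index sets are disjoint. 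Because $\mathcal{F}$ is a cover, $N(x)\cup N(-x)\neq\emptyset$, so $\Psi(x)\neq 0$; normalizing yields an odd continuous map $\widehat\Psi\colon \bbS^{d-1}\to \bbS^{k-1}$.

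\paragraph{Reduction to an equivariant index bound.} Assume for contradiction that $|N(x)|\leq m$ for all $x$, where $m:=\ceil{d/2+1}-1$. Then $\widehat\Psi$ factors through the antipodally-invariant subspace
\[
Z_m \;:=\; \Set{ v\in \bbS^{k-1} : \#\Set{i:v_i>0}\leq m \text{ and } \#\Set{i:v_i<0}\leq m }.
\]
It therefore suffices to produce a $\mathbb{Z}_2$-equivariant map $Z_m\to \bbS^{d-2}$: composing with $\widehat\Psi$ gives an odd continuous map $\bbS^{d-1}\to \bbS^{d-2}$, forbidden by classical Borsuk--Ulam. Equivalently, one must show $\mathrm{ind}_{\mathbb{Z}_2}(Z_m)\leq d-2$. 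The space $Z_m$ has a natural $\mathbb{Z}_2$-CW structure whose open cells are indexed by disjoint ordered pairs $(A,B)$ of subsets of $[k]$ with $|A|,|B|\leq m$, and on which the antipodal action sends the cell of $(A,B)$ to the cell of $(B,A)$. The target map $Z_m\to \bbS^{d-2}$ would be constructed cell-by-cell, using this pair-swap symmetry to guarantee equivariance.

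\paragraph{Main obstacle.} The entire weight of the argument rests on the sharp index bound $\mathrm{ind}_{\mathbb{Z}_2}(Z_m)\leq d-2$. A naive approach identifies $Z_m$ (up to equivariant homotopy) with a $2m$-fold iterated join of copies of the antipodal $S^0$, giving only $\mathrm{ind}_{\mathbb{Z}_2}(Z_m)\leq 2m-1$; this suffices to recover Ky Fan's weaker bound of $d/2$ mentioned in the introduction, but misses the ``$+1$'' improvement. To close the gap, one has to exploit the disjointness of positive and negative supports more carefully: the expected route is an inductive equivariant deformation retraction that collapses certain ``unbalanced'' cells of $Z_m$ (those where $|A|\neq|B|$, or where the $\mathbb{Z}_2$-orbit has small stabilizer) onto a lower-dimensional subcomplex whose index can be computed by induction on $m$. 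Executing this retraction rigorously, and verifying that the collapse really does save one unit of index at the boundary parameter $m=\ceil{d/2+1}-1$, is the principal technical content taken from~\cite{chase2023local} and the most delicate step of the plan; everything else is a routine packaging of partition-of-unity and classical Borsuk--Ulam machinery.
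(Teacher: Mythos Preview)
First, a point of context: the paper does not prove \Cref{thm:local-borsukulam} at all. It is quoted as a black-box result from \cite{chase2023local} and applied directly in the lower-bound argument for \Cref{thm:main}. There is therefore no ``paper's own proof'' to compare your attempt against.

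Assessing your proposal on its own merits, there is a concrete gap in the index-bound step. Take $d=2$, so that $m=\lceil d/2\rceil=1$ and your argument requires $\mathrm{ind}_{\mathbb{Z}_2}(Z_1)\le d-2=0$. For any $k\ge 3$ the space $Z_1\subseteq\mathbb{S}^{k-1}$ is a connected one-dimensional free $\mathbb{Z}_2$-complex (the graph on vertices $\pm e_1,\ldots,\pm e_k$ with an edge from $+e_i$ to $-e_j$ whenever $i\ne j$, the antipode swapping $+e_i\leftrightarrow -e_i$); connectedness forces $\mathrm{ind}_{\mathbb{Z}_2}(Z_1)\ge 1$ and one-dimensionality gives equality. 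Thus your ``naive'' value $2m-1$ is the actual index here, and the inequality you need is simply false. The proposed remedy---an equivariant deformation retraction of $Z_m$ onto a subcomplex of smaller index---cannot work, because a $\mathbb{Z}_2$-deformation retract has the same $\mathbb{Z}_2$-index as the ambient space. Nor does ``disjointness of positive and negative supports'' point to any extra leverage: that disjointness holds for every vector in $\mathbb{R}^k$ and is already baked into the definition of $Z_m$.

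One genuine piece of structure your sketch discards is the linear relation $\sum_i\Psi_i(x)=\sum_i\phi_i(x)-\sum_i\phi_i(-x)=0$, which forces $\widehat\Psi$ to land in the codimension-one slice $Z_m\cap\{\sum_i v_i=0\}$. For $m=1$ that slice is the finite antipodal set $\{(e_i-e_j)/\sqrt 2:i\ne j\}$, which does have index $0$ (map it to $S^0$ via $\sign(i-j)$), so the case $d=2$ then goes through. Whether this or some sharper refinement recovers the full ``$+1$'' improvement for every $d$ is exactly the substance of \cite{chase2023local}; in any event, your plan cannot be completed along the lines you describe without isolating and using some additional constraint on the image of $\widehat\Psi$ beyond mere membership in $Z_m$.
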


Fix any margin $\gamma\in(0,1)$, dimension $d \geq 2$ and $\epsilon \in (0,1/2)$, and suppose that $\bm{\cA}$ is an $(\epsilon,L)$-list replicable learning rule for $\cH_{\gamma}^d$. We prove that $L\geq \frac{d}{2}+1$.

By the definition of list replicability, for any $\delta>0$, there is an integer $n$ so that for any realizable distribution $\mu$, there exists a list of hypotheses $\{h_1,\dots,h_L\}$ with
    \[
        \Prob_{\bS \sim \mu^n}[\bm{\cA}(\bS) \in  \{h_1,\dots,h_L\}] \geq 1-\delta  \text{ and } 
        \loss_\mu(h_i) \leq \epsilon \text{ for all }i\in[L].
    \]
Now pick any $\alpha>0$ and $\epsilon'\in(\epsilon,1/2)$. By taking $\delta$ sufficiently small, for any distribution $\mu$, we can choose a hypothesis $h_{\mu}\in\{h_1,\dots,h_L\}$ such that    \begin{equation}\label{condition open set}
        \Pr_{\bS \sim \mu^n}[\bm{\cA}(\bS) = h_{\mu}] > \frac{1}{L+\alpha} \text{ and } 
        \loss_\mu(h_{\mu}) < \epsilon'.
    \end{equation}

We will focus on a certain collection of realizable distributions $\mu$ on $\bbS^{d-1} \times \{\pm 1\}$. For any $w \in \bbS^{d-1}$, take $\mu_w$ to be the uniform distribution on the set $\{ (x,c_w(x)) \ | \ x \in \supp(c_w) \}$. These distributions are, by definition, realizable. Hence, for each $\mu_w$, we can choose some particular hypothesis $h_{\mu_w}$ that satisfies the conditions in \eqref{condition open set}. Collect these hypotheses in a set $T$, that is to say
    \[
        T \coloneqq \{h_{\mu_w} \ | \ w \in \bbS^{d-1}\}.
    \]
For each $h \in T$, define the set $U_h \subset \bbS^{d-1}$ as 
    \[
        U_h \coloneqq \left\{
        w \in \bbS^{d-1} \ | \ 
         \Pr_{\bS \sim \mu_{w}^n}[\bm{\cA}(\bS) = h] > \frac{1}{L+\alpha} \text{ and } 
        \loss_{\mu_w}(h) < \epsilon'
        \right\}.
    \]
\begin{claim} 
    The family $\{U_h\}_{h\in T}$ forms an antipodal-free open cover of $\bbS^{d-1}$.
\end{claim}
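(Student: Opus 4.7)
The plan is to verify the three defining properties of an antipodal-free open cover: that $\bigcup_{h\in T} C_h=\bbS^{d-1}$, that no single $C_h$ contains a pair $\{w,-w\}$, and that each $C_h$ is open. The covering property is immediate from construction: for every $w\in\bbS^{d-1}$, the hypothesis $h_{\cD_w}$ belongs to $T$ by definition, and $w\in C_{h_{\cD_w}}$ because the two inequalities guaranteed by \cref{condition open set} are precisely the membership conditions for $C_{h_{\cD_w}}$.

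For the antipodal-free property, I would exploit that $\supp(h_w)=\supp(h_{-w})=\{x\in\bbS^{d-1}:|\inp{w}{x}|\geq\gamma\}$ while $h_{-w}(x)=-h_w(x)$ on this common support. Since the learner's output $h$ is a total $\{\pm 1\}$-valued hypothesis, for every $x\in\supp(h_w)$ exactly one of the events $h(x)\neq h_w(x)$ and $h(x)\neq h_{-w}(x)$ occurs. Integrating against the common uniform measure on $\supp(h_w)$ yields the identity
\[
\cL_{\cD_w}(h)+\cL_{\cD_{-w}}(h)=1.
\]
If $w\in C_h$, then $\cL_{\cD_w}(h)<\epsilon'<1/2$, which forces $\cL_{\cD_{-w}}(h)>1/2>\epsilon'$, so $-w\notin C_h$.

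The main obstacle is openness of each $C_h$, and my plan is to reduce it to continuity of the map $w\mapsto\cD_w^n$ from $\bbS^{d-1}$ into the space of probability measures on $(\bbS^{d-1}\times\{\pm 1\})^n$ under the total variation metric. Letting $\mu_w$ denote the normalized surface measure on $K_w\coloneqq\supp(h_w)$, so that $\cD_w=(\phi_w)_\ast\mu_w$ for $\phi_w(x)\coloneqq(x,h_w(x))$, a direct estimate bounds $\norm{\cD_w-\cD_{w_0}}_{\mathrm{TV}}$ by $\norm{\mu_w-\mu_{w_0}}_{\mathrm{TV}}$ plus the $\mu_{w_0}$-mass of $\{x\in K_w\cap K_{w_0}:h_w(x)\neq h_{w_0}(x)\}$. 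Both terms vanish as $w\to w_0$: the symmetric difference $K_w\triangle K_{w_0}$ lies in the thin spherical shell $\{|\inp{w_0}{x}|\in[\gamma-\norm{w-w_0},\gamma+\norm{w-w_0}]\}$, whose surface area is $O(\norm{w-w_0})$, and once $\norm{w-w_0}<\gamma$ one has $\sign\inp{w}{x}=\sign\inp{w_0}{x}$ throughout $K_{w_0}$, so the label-disagreement set is in fact empty for $w$ near $w_0$. Granted this TV-continuity of $\cD_w^n$, both defining inequalities of $C_h$ become strict inequalities in a continuous function of $w$: the loss is $\int\mathbf{1}[h(x)\neq y]\,d\cD_w(x,y)$ and the replication probability is $\int\Pr[\bm{\cA}(S)=h\mid S]\,d\cD_w^n(S)$, each an integral of a bounded measurable function against a TV-continuous family of probability measures, hence continuous in $w$.
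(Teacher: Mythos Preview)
Your proof is correct and follows the same three-part structure as the paper: covering by construction, antipodal-freeness via the identity $\cL_{\cD_w}(h)+\cL_{\cD_{-w}}(h)=1$, and openness via continuity of the two defining quantities in $w$. The only difference is that the paper simply asserts this continuity, whereas you actually justify it by establishing total-variation continuity of $w\mapsto\cD_w$ through the symmetric-difference and label-agreement estimates; this fills in a detail the paper leaves to the reader but does not change the approach.
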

    
    \begin{proof}
        The fact that any set $U_h$ is antipodal-free follows from the accuracy constraint $\loss_{\mu_w}(h) < \epsilon'$. Indeed, for any $w \in \bbS^{d-1}$, the concepts $c_w$ and $c_{-w}$ have identical support, on which they disagree at every point. Thus the population loss of any hypothesis $h$ satisfies the equation
        \[
            \loss_{\mu_w}(h) + \loss_{\mu_{-w}}(h) = 1.
        \]
        For any $w \in U_h$, we have that $\loss_{\mu_w}(h) < \epsilon' < 1/2$, whereby $w$ and $-w$ cannot both be in $U_h$.

        Next, each set $U_h$ is open because both $\Pr_{\bm{S} \sim \mu_{w}^n}[\bm{\cA}(\bm{S}) = h]$ and $\loss_{\mu_{w}}(h)$ are continuous in $w$. Lastly, the family $\{U_h\}_{h\in T}$ covers $\bbS^{d-1}$ because, for any $w \in \bbS^{d-1}$, the set $U_{h_{\mu_w}}$ contains $w$ by construction.
    \end{proof}

Now note that the antipodal-free open cover $\{U_h\}_{h\in T}$ admits a finite subcover by the compactness of the unit sphere. Applying \cref{thm:local-borsukulam} to such a finite subcover guarantees that some $w \in \bbS^{d-1}$ is contained in at least $t \coloneqq \ceil{ \frac{d}{2}+1}$ sets $U_{h_1}, U_{h_2},\dots,U_{h_t}$. Unpacking definitions reveals that the distribution $\mu_{w}$ has the property
    \[
        \Pr_{\bS \sim \mu_{w}^n}[\bm{\cA}(\bS) = h_i] > \frac{1}{L+\alpha}
    \]
for $t$ distinct hypotheses $h_i \in T$. Because these $h_i$ are distinct, the events $[\bm{\cA}(\bS) = h_i]$ are disjoint, and therefore
    \[
        1 \geq \Prob_{\bS \sim \mu_{w}^n} \bigcup_{i=1}^t [\bm{\cA}(\bS) = h_i]
        = \sum_{i=1}^t \Prob_{\bS \sim \mu_{w}^n} [\bm{\cA}(\bS) = h_i]
        > \frac{t}{L+\alpha}.
    \]
It follows that $L+\alpha> t = \ceil{\frac{d}{2}+1}$ for any $\alpha>0$, which implies the desired lower bound $L \geq  \ceil{\frac{d}{2}+1}$.

\subsection{The upper bound}

To prove the upper bound, we design a list replicable learning algorithm $\bm{\cA}$ that learns $\cH_{\gamma}^d$ with list size $d$ independent of $\epsilon>0$. Given $w \in \bbS^{d-1}$, let $\overline{c}_w:\bbS^{d-1} \to \set{\pm 1}$ denote the total concept corresponding to the closed half-space defined by $w$.

\begin{equation*}
        \overline{c}_w(x) \coloneqq
        \begin{cases}
            1 & \text{if } \inp{w}{x}  \ge 0\\
            -1 & \text{if } \inp{w}{x}  < 0
        \end{cases}.
\end{equation*}

 Fundamentally, as in \cite{kalavasis2024replicable}, we estimate a large-margin linear separator using the average of many runs of an SVM maximum margin separator. Then, we use a rounding scheme based on a uniform triangulation of the $\ell_1$ sphere, with the guarantee that with high probability, our learning rule will choose one of at most $d$ separators.

Consider a training sample $(x_1, y_1), \dots, (x_n, y_n) \in \R^d \times \set{\pm 1}$. The (homogeneous) \emph{hard-SVM} is an optimization problem that returns a homogeneous half-space that classifies the training sample correctly while maximizing the margin $\gamma$. More formally, it is the following optimization problem over the variables $\gamma \in \mathbb{R}$ and $w \in \bbS^{d-1}$: 
\[ 
\begin{array}{l rll}
\max  & \gamma  & &\\ 
\text{s.t. } & y_i \inp{x_i}{w}  &\ge  \gamma& \text{for } i=1,\ldots,n \\
& \gamma & \ge 0 & \\
& w & \in \bbS^{d-1} &\\ 
\end{array}
\] 
One can use semi-definite programming to solve this optimization problem efficiently---to check whether it is feasible and, if so, to find the maximizing $w$.

\begin{definition}[$\gamma$-Separator]
        Let $S \subseteq \bbS^{d-1} \times \Set{\pm 1}$. We call $w \in \bbS^{d-1}$ a $\gamma$-separator for $S$ if
        \[
            y  \inp{x}{w}  \geq \gamma \ \ \text{ for all } (x,y) \in S.
        \]
Furthermore, for any distribution $\mu$ over $\bbS^{d-1} \times \{\pm 1\}$, we call $w$ a $(\gamma,\epsilon)$-separator for $\mu$  if 
        \[
            \Pr_{(x, y) \sim \mu}[y \inp{x}{w}< \gamma] \leq \epsilon.
        \]
\end{definition}

When learning $\cH_{\gamma}^d$ in the realizable setting, for any sample set $S$ drawn from a realizable distribution $\mu$, there is some $w$ that $\gamma$-separates $S$. Therefore, the hard-SVM will be feasible and return a vector $w$ that $\gamma$-separates $S$.

The following theorem, due to \cite{TaylorBartlett98}, says that if we take a sufficiently large sample $\bm{S}$ and compute a good separator $w$ for it using hard-SVM, then with high probability, $w$ will also be a good separator for $\mu$. 

\begin{theorem}[{SVM generalization bound~\cite[Theorem 3.5]{TaylorBartlett98}}]\label{thm_svm}
   For all $\epsilon, \delta > 0$, there exists $n \coloneqq n(\epsilon, \delta)$ such that the following holds.  Let $\mu$ be any distribution over $\bbS^{d-1} \times \Set{\pm 1}$.
    \[
        \Pr_{\bm{S} \sim \mu^n}
        \left[\text{Every $w\in \bbS^{d-1}$ that $\gamma$-separates $\bm{S}$ also  $\left(\frac{\gamma}{2},\epsilon\right)$-separates $\mu$}\right] \geq 1 - \delta.
   \] 
\end{theorem}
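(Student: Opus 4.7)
The plan is to establish this generalization bound via a uniform-convergence argument on a surrogate margin (ramp) loss, reducing the statement to a Rademacher-complexity bound for bounded linear functionals on the unit ball. This route yields the dimension-free sample complexity; an essentially equivalent approach, and the one used in the original Shawe-Taylor--Bartlett proof, passes through the fat-shattering dimension.

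First, I would define the ramp loss $\psi_\gamma:\R\to[0,1]$ equal to $1$ for $t\leq \gamma/2$, equal to $0$ for $t\geq \gamma$, and linearly interpolated in between. Then $\psi_\gamma$ is $(2/\gamma)$-Lipschitz and sandwiches the indicators of interest: $\mathbf{1}[t<\gamma/2]\leq \psi_\gamma(t)\leq \mathbf{1}[t<\gamma]$. Set
\[
\widehat L(w)\coloneqq \tfrac{1}{n}\sum_{i=1}^n \psi_\gamma(y_i\langle w,x_i\rangle),\qquad L(w)\coloneqq \E_{(x,y)\sim \cD}\bigl[\psi_\gamma(y\langle w,x\rangle)\bigr].
\]
Two easy observations: (i) if $w$ is a $\gamma$-separator for $\bm{S}$, then $\widehat L(w)=0$, and (ii) $\Pr_{(x,y)\sim\cD}[y\langle w,x\rangle<\gamma/2]\leq L(w)$. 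So it suffices to show that $\sup_w\bigl(L(w)-\widehat L(w)\bigr)\leq \epsilon$ with probability at least $1-\delta$.

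Next, I would invoke the standard symmetrization bound: with probability at least $1-\delta$ over $\bm{S}\sim\cD^n$,
\[
\sup_{w\in\bbS^{d-1}}\bigl(L(w)-\widehat L(w)\bigr)\leq 2\,\mathfrak{R}_n(\psi_\gamma\circ \mathcal F)+\sqrt{2\ln(1/\delta)/n},
\]
where $\mathcal F=\{(x,y)\mapsto y\langle w,x\rangle:w\in\bbS^{d-1}\}$ and $\mathfrak{R}_n$ is the empirical Rademacher complexity. Talagrand's contraction lemma (applied to the $(2/\gamma)$-Lipschitz $\psi_\gamma$) gives $\mathfrak{R}_n(\psi_\gamma\circ \mathcal F)\leq (2/\gamma)\,\mathfrak{R}_n(\mathcal F)$, and Cauchy--Schwarz together with $\|x_i\|\leq 1$ and independence of the Rademacher signs yields
\[
\mathfrak{R}_n(\mathcal F)=\tfrac{1}{n}\E_\sigma\Bigl\|\sum_{i=1}^n \sigma_i y_i x_i\Bigr\|\leq \tfrac{1}{n}\sqrt{\E_\sigma\Bigl\|\sum_i \sigma_i y_i x_i\Bigr\|^2}=\tfrac{1}{\sqrt n}.
\]
Choosing $n=\Theta\bigl((\gamma\epsilon)^{-2}+\epsilon^{-2}\log(1/\delta)\bigr)$ makes the whole right-hand side at most $\epsilon$, so that every $\gamma$-separator $w$ of $\bm{S}$ satisfies $\Pr_\cD[y\langle w,x\rangle<\gamma/2]\leq L(w)\leq \epsilon$, which is exactly the $(\gamma/2,\epsilon)$-separator conclusion.

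The main obstacle---or, more accurately, the pivotal step that makes the whole argument $d$-independent---is the dimension-free Rademacher bound $\mathfrak{R}_n(\mathcal F)\leq 1/\sqrt{n}$. Replacing it by a naive $\eta$-net over $\bbS^{d-1}$ would inject a spurious $d$-factor into the sample complexity and defeat the purpose of the margin assumption. Everything else---the ramp-loss sandwich, the contraction lemma, and the symmetrization inequality---is standard uniform-convergence bookkeeping.
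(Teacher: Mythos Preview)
Your proof is correct, but it takes a different and more self-contained route than the paper. The paper does not reprove the margin generalization bound from scratch; it simply invokes \cite[Theorem~3.5]{TaylorBartlett98} as a black box, and in the accompanying remark explains how to extract the specific $(\gamma/2,\epsilon)$-separator conclusion: apply the cited theorem twice, once to $h_1(x)=\sign(\langle w,x\rangle+\gamma/2)$ and once to $h_2(x)=\sign(\langle w,x\rangle-\gamma/2)$, each of which is realized on the sample with margin $\gamma/2$ whenever $w$ $\gamma$-separates $\bm{S}$, and then union-bound their errors. You instead bake the two-sided margin requirement directly into a single ramp-loss surrogate and prove the uniform deviation bound via symmetrization, Talagrand contraction, and the dimension-free $1/\sqrt{n}$ Rademacher bound for unit-norm linear functionals. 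Both routes yield dimension-free sample complexity; yours is fully self-contained and gives an explicit $n=\Theta\bigl((\gamma\epsilon)^{-2}+\epsilon^{-2}\log(1/\delta)\bigr)$, while the paper's is shorter precisely because it offloads the uniform-convergence work to the literature. Your remark that the original Shawe-Taylor--Bartlett argument passes through fat-shattering rather than Rademacher complexity is also accurate.
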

\begin{remark} 
To prove~\Cref{thm_svm}, one can apply \cite[Theorem 3.5]{TaylorBartlett98} to show that, with probability at least $1-\delta$, both  $h_1(x) \coloneqq \sign (\inp{x}{w}+\frac{\gamma}{2})$ and $h_2(x) \coloneqq \sign (\inp{x}{w}-\frac{\gamma}{2})$ have loss at most $\frac{\epsilon}{2}$, in which case $w$ is a  $\left(\frac{\gamma}{2},\epsilon\right)$-separator for $\mu$. 
\end{remark}

Regarding optimal bounds on $n(\epsilon, \delta)$ in \Cref{thm_svm}, we refer the reader to~\cite{gronlund2020SVM,kalavasis2024replicable}.

First, we prove a simple concentration result for sums of i.i.d. random vectors to show that the outputs of multiple runs of hard-SVM on independent samples are typically concentrated around their mean. 

\begin{lemma}\label{thm_vec_bernstein} 
        Let $\bx_1, \dots, \bx_k \in \R^d$ be i.i.d random variables with mean $\mu$ and $\norm{\bx_i - \mu}_\infty \leq C$. Let $\bm{Z} = \frac{1}{k} \sum_{i=1}^k \bx_i$. For all $t>0$, 
        \[\Pr[\norm{\bm{Z}-\mu}_1 \ge t] \le 2d e^{\frac{-k t^2}{2d^2 C^2}}. \] 
    \end{lemma}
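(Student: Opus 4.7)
The plan is to reduce the vector concentration bound to a coordinate-wise application of the scalar Hoeffding inequality stated in the preliminaries, followed by a union bound over the $d$ coordinates. The key observation is that if $\|\bm{Z}-\mu\|_1 \ge t$, then by pigeonhole at least one coordinate must carry at least $t/d$ of the total deviation, so it suffices to control each coordinate at scale $t/d$ and pay a factor of $d$ in the failure probability.

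First I would write $\bx_i = (\bx_{i,1},\dots,\bx_{i,d})$ and $\mu = (\mu_1,\dots,\mu_d)$, so that the $j$th coordinate of $\bm{Z}-\mu$ equals $\frac{1}{k}\sum_{i=1}^{k}(\bx_{i,j}-\mu_j)$. The hypothesis $\|\bx_i-\mu\|_\infty \le C$ is exactly the statement that for every fixed coordinate $j$ the centered scalar variables $\bx_{1,j}-\mu_j,\dots,\bx_{k,j}-\mu_j$ are independent, mean zero, and uniformly bounded in $[-C,C]$, so they satisfy the hypotheses of the Hoeffding inequality stated in the preliminaries.

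Next, for each fixed $j$ the event $|\bm{Z}_j-\mu_j| \ge t/d$ coincides with $\bigl|\sum_{i=1}^{k}(\bx_{i,j}-\mu_j)\bigr| \ge kt/d$; Hoeffding (applied with $n=k$, $c=C$, and threshold $s = kt/d$) bounds its probability by $2\exp\!\bigl(-kt^2/(2d^2C^2)\bigr)$. A union bound over $j=1,\dots,d$, combined with the pigeonhole observation that $\|\bm{Z}-\mu\|_1 \ge t$ forces at least one coordinate excess of size $\ge t/d$, then yields the stated estimate $2d\exp\!\bigl(-kt^2/(2d^2C^2)\bigr)$. There is no real conceptual obstacle here; the only step that needs mild care is matching the Hoeffding exponent to the target, since one must apply the inequality to the unnormalized sum $\sum_i(\bx_{i,j}-\mu_j)$ rather than directly to the average $\bm{Z}_j-\mu_j$, and track that the factor $d^2$ in the exponent comes precisely from squaring the per-coordinate threshold $t/d$.
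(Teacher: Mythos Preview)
Your proposal is correct and follows exactly the paper's own argument: apply Hoeffding coordinate-wise at threshold $t/d$, then union bound over the $d$ coordinates, using pigeonhole to pass from $\|\bm{Z}-\mu\|_1 \ge t$ to a single coordinate deviation of at least $t/d$. The only additional detail you spell out that the paper leaves implicit is the translation between the average $\bm{Z}_j-\mu_j$ and the unnormalized sum when invoking Hoeffding, which is fine.
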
    
    \begin{proof}
    We apply Hoeffding's inequality~\footnote{Let $c\in \R$ and let $\bm{x}_1,\ldots, \bm{x}_n$ be independent random variables with $\bm{x}_i\in [-c,c]$ and $\E[\bm{x}_i]=0$. For any $t>0$,
\[
\Pr \left[\left|\sum_{i=1}^n \bm{x}_i\right|\geq t \right]\leq 2e^{-\frac{t^2}{2c}}. 
\]} to each coordinate and take the union bound. By Hoeffding's inequality, for every $j \in [d]$, we have 
    \[\Pr\left[|\bm{Z}_j-\mu_j| \ge \frac{t}{d}\right] \le 2 e^{\frac{-k t^2}{ 2d^2 C^2}}. \]
    Therefore, by the union bound, 
    \[\Pr\left[\norm{\bm{Z}-\mu}_1 \ge t \right] \le 2d e^{\frac{-k t^2}{ 2d^2 C^2}}. \qedhere \]
    \end{proof}

We will use a rounding scheme that ensures any small neighbourhood on $\bbS^{d-1}$ is rounded to at most $d$ points.

\begin{lemma}\label{lemma:good_rounding_scheme}
        For every $\alpha > 0$, there is a $\beta(d)>0$ and a rounding scheme
        \[
            \mathrm{round}_{\alpha} : \bbS^{d-1} \to \bbS^{d-1}
        \]
        such that for all $x \in \bbS^{d-1}$,
        \begin{enumerate}
            \item $\norm{\mathrm{round}_{\alpha}(x) - x }_2 < \alpha$, and
            \item The set $R_x \coloneqq  \Set{
                \mathrm{round}_{\alpha}(y)  \ | \  y \in \bbS^{d-1} \text{ and } \norm{x - y}_2 \leq \beta}$ has size at most  $d$. 
        \end{enumerate} 
    \end{lemma}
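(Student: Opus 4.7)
The plan is to build $\mathrm{round}_\alpha$ as a nearest-site (Voronoi) rounding map to a sufficiently dense net $P \subseteq \bbS^{d-1}$ chosen in \emph{general position}. The bound $|R_x| \le d$ will follow from the fact that at most $d$ Voronoi cells can meet at any single point on the $(d-1)$-dimensional sphere when the sites are in general position.

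First I would construct a finite set $P = \{p_1, \dots, p_N\} \subseteq \bbS^{d-1}$ that is $\alpha$-dense (every $x \in \bbS^{d-1}$ satisfies $\|x - p_i\|_2 < \alpha$ for some $i$) and has the property that no point of $\bbS^{d-1}$ is equidistant from more than $d$ points of $P$. Existence is standard: take any maximal $(\alpha/2)$-packing of $\bbS^{d-1}$ (which is automatically an $\alpha$-net), and then apply an arbitrarily small perturbation. General position is a generic condition; the configurations in $(\bbS^{d-1})^N$ failing it form a finite union of proper algebraic subvarieties, hence a measure-zero set. I would then define $\mathrm{round}_\alpha(x)$ to be the (almost surely unique) $p \in P$ minimizing $\|x - p\|_2$, breaking ties arbitrarily on the measure-zero ambiguous set. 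Property 1 is immediate from $\alpha$-density.

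For property 2, let $c_i := \{x \in \bbS^{d-1} : \|x - p_i\|_2 \le \|x - p_j\|_2 \text{ for all } j\}$ denote the closed Voronoi cell of $p_i$, so that $R_x = \{p_i : c_i \cap B(x,\beta) \neq \emptyset\}$; it suffices to show that at most $d$ cells meet any ball of a uniformly small radius. I would produce a uniform $\beta = \beta(d,P) > 0$ by compactness and contradiction. Suppose no such $\beta$ exists. Then there are sequences $x_n \in \bbS^{d-1}$ and $\beta_n \to 0$ together with $d+1$ distinct indices $i_1(n), \dots, i_{d+1}(n)$ such that each $c_{i_j(n)}$ meets $B(x_n, \beta_n)$. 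Since $P$ is finite, we may pass to a subsequence on which the tuple of indices is constant, say $i_1, \dots, i_{d+1}$, and by compactness of $\bbS^{d-1}$ we may also assume $x_n \to x^*$. Since each $c_{i_j}$ is closed and $\dist(x_n, c_{i_j}) < \beta_n \to 0$, we conclude $x^* \in c_{i_j}$ for every $j$. But then $x^*$ is equidistant from the $d+1$ sites $p_{i_1}, \dots, p_{i_{d+1}}$ (all at its common distance to its nearest site), contradicting general position.

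The main obstacle I anticipate is precisely this uniformity in $x$. A naive pointwise approach---argue that each individual $x$ admits some $\beta_x > 0$ and then take an infimum---does not work, because the relevant local feature size at $x$ (the distance from $x$ to the nearest Voronoi face whose closure does not contain $x$) fails to be lower semi-continuous in $x$, and its infimum over $\bbS^{d-1}$ can well be zero. The global contradiction argument above sidesteps this by pushing the putative failure to a limit point $x^*$ that is forced to lie in the closure of $d+1$ distinct cells simultaneously, which general position rules out.
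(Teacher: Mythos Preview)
Your proposal is correct and takes essentially the same approach as the paper: both round to the nearest point of a fine net in general position on $\bbS^{d-1}$, use that no point of the sphere can be equidistant from $d+1$ such sites, and then invoke compactness to upgrade this to a uniform $\beta$. The only cosmetic difference is the packaging of the compactness step---the paper defines the continuous positive gap function $\phi(x)=\tau(x)-\min_{y\in T}\|x-y\|_2$ (with $\tau(x)$ the distance to a $(d{+}1)$-th closest net point) and takes $\beta$ to be a fraction of its minimum, whereas you run an equivalent sequential-compactness contradiction; also, your displayed equality for $R_x$ should strictly be the containment $R_x\subseteq\{p_i:c_i\cap B(x,\beta)\neq\emptyset\}$, which is all you need.
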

\begin{proof}
Consider any $\frac{\alpha}{2}$-net $T$ of points in general position on $\bbS^{d-1}$ and define the rounding function as 
\[
\mathrm{round}_{\alpha}(x) \coloneqq \arg\min_{y \in T} \norm{x-y}_2.
\]
Property 1 follows directly from the definition of $\mathrm{round}_\alpha$, so it remains to prove 2.  

If $|T| \le d$, both conditions are satisfied. Thus, assume $|T|>d$.  
We will use the fact that for any set of $d+1$ distinct points $x_1,\ldots, x_{d+1}\in T$, the origin is the only point equidistant from all of them. To see this, suppose there exists a point $y\in \R^d$ that is equidistant from each $x_i$, meaning there exists some $r$ such that 
\[
r^2=\norm{x_i-y}_2^2= 1+\norm{y}_2^2 - 2\inp{x_i}{y}.
\]
Consequently, $y$ is orthogonal to the linearly independent vectors $x_1-x_2$, \ldots, $x_1-x_{d+1}$, and thus $y=\vec{0}$.

Define the map 
$\phi:\bbS^{d-1} \to \mathbb{R}_{\ge 0}$ 
as 
\[\phi(x) \coloneqq \tau(x)-\min_{y \in T} \norm{x-y}_2,\]
where $\tau(x)$ denotes the distance from $x$ to a $(d+1)$-th closest point in $T$. Since no point in $\mathbb{S}^{d-1}$  can be equidistant to more than $d$ points in $T$, we have $\phi(x)>0$ for all $x$. And since $\phi$ is continuous and  $\bbS^{d-1}$ is compact, we have  
\[ \beta' \coloneqq \min_x \phi(x)>0. \]
Taking $\beta \coloneqq \beta'/3$ completes the proof. 
\end{proof}
    
    \begin{proof}[Upper bound of \Cref{thm:main}]

        We need to show that for any margin $\gamma \in (0, 1)$, accuracy parameter $\epsilon \in (0,1/2)$, and dimension $d \geq 1$,  we have $\LR_\epsilon(\cH_{\gamma}^d) \le d$. 
    
        We will construct a list-replicable learner that always outputs a hypothesis of the form $\overline{c}_{w}$ for some $w \in \bbS^{d-1}$.

        Let $k=k(d,\gamma)$ and $n_0=n_0(\epsilon,\delta,k)$ be integers yet to be determined.
        Consider the following learning rule $\bm{\cA}$ that uses the rounding scheme of  \Cref{lemma:good_rounding_scheme}.

\begin{algorithm}[ht]
\caption{\label{alg:A}The learning rule $\bm{\cA}$}
\begin{algorithmic}[1]
  \FOR{$i \gets 1$ to $k$}
    \STATE Sample $\bS_i \sim \mu^{n_0}$.
    \STATE Let $\bw_i \gets \text{hard-SVM}(\bS_i)$.
  \ENDFOR
  \STATE Let $\bw \gets \frac{1}{k} \sum_{i=1}^k \bw_i$ and $\bz \gets \frac{\bw}{\norm{\bw}_2}$.
  \STATE Let $\tilde{\bz} \gets \mathrm{round}_{\gamma / 2}(\bz)$.
  \STATE \textbf{output} the hypothesis $\overline{c}_{\tilde{\bz}}$.
\end{algorithmic}
\end{algorithm}

We first show that the learning rule $\bm{\cA}$ presented in~\Cref{alg:A} is a PAC learner.

        \begin{claim}
        \label{claim:Average_PAC}
            Let $\bm{\cA}$ and $\bw$ be as in~\Cref{alg:A}.  For every $\epsilon, \delta \in (0, 1)$ and $k \in \N$, there exists $n_0 \coloneqq n_0(\epsilon,\delta,k) \in \N$ such that for every distribution $\mu$ realizable by $\cH_{\gamma}^d$, we have  
            \begin{equation}            
            \label{eq:large_w}
              \Pr_{\bS \sim \mu^{k n_0}}\left[\norm{\bw}_2 <\frac{\gamma}{2}\right]\le \frac{\delta}{4} 
            \end{equation}  
            and
            \begin{equation}
            \label{eq:claim_small_loss}
            \Pr_{\bS \sim \mu^{k n_0}}[\loss_\mu(\bm{\cA}(\bS)) \geq \epsilon] \leq \frac{\delta}{4}.
            \end{equation}
        \end{claim}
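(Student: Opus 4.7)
The plan is to choose $n_0$ large enough to satisfy two independent sample-complexity requirements simultaneously: first, the SVM generalization guarantee of \Cref{thm_svm} applied with accuracy $\epsilon_1$ and confidence $\delta_1$ satisfying $\epsilon_1,\delta_1 \le \epsilon\delta/(8k)$; and second, the vector concentration bound of \Cref{thm_vec_bernstein} (applied with $C=2$, since $\|yx-\mu\|_\infty \le 2$) guaranteeing that each empirical mean $\hat{\mu}_i \coloneqq \frac{1}{n_0}\sum_{(x,y)\in\bS_i} yx$ satisfies $\|\hat{\mu}_i-\mu\|_2 \le \gamma/2$ with failure probability at most $\delta/(4k)$, where $\mu \coloneqq \E_{(x,y)\sim\cD}[yx]$. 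Both conditions can be met simultaneously by a single $n_0$ polynomial in $d$, $1/\gamma$, $k/\delta$, and $1/\epsilon$.

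For \Cref{eq:large_w}, I will exploit the max-margin property of hard-SVM. Since $\cD$ is realizable by $\cH^d_\gamma$, almost surely each sample $\bS_i$ admits some witness $w_i^*\in\bbS^{d-1}$ with $y\inp{x}{w_i^*}\ge \gamma$ for every $(x,y)\in\bS_i$; hence the hard-SVM optimum $\bw_i$ achieves margin at least $\gamma$ on $\bS_i$, so averaging over the sample yields $\inp{\bw_i}{\hat{\mu}_i}\ge \gamma$. The vector concentration bound combined with a union bound over $i\in[k]$ gives, with probability at least $1-\delta/4$, that every $\hat{\mu}_i$ lies within $\ell_2$-distance $\gamma/2$ of $\mu$, and since $\|\bw_i\|_2=1$, Cauchy--Schwarz yields $\inp{\bw_i}{\mu}\ge \gamma/2$ for all $i$. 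Averaging over $i$ gives $\inp{\bw}{\mu}\ge \gamma/2$, and since $\|\mu\|_2\le \E\|yx\|_2=1$ by Jensen, a second application of Cauchy--Schwarz produces $\|\bw\|_2\ge \gamma/2$, as required.

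For \Cref{eq:claim_small_loss}, the key structural observation is that any misclassification of $(x,y)$ by $\overline{h}_{\tilde{\bz}}$ forces $y\inp{x}{\bw}<\gamma/2$. Indeed, the rounding guarantee $\|\tilde{\bz}-\bz\|_2<\gamma/2$ together with $\|x\|_2=1$ gives $|y\inp{x}{\tilde{\bz}}-y\inp{x}{\bz}|<\gamma/2$, so misclassification $y\inp{x}{\tilde{\bz}}\le 0$ forces $y\inp{x}{\bz}<\gamma/2$; and since $\bz=\bw/\|\bw\|_2$ with $\|\bw\|_2\le 1$, this forces $y\inp{x}{\bw}<\gamma/2$. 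Because $y\inp{x}{\bw}=\frac{1}{k}\sum_i y\inp{x}{\bw_i}$, the event $y\inp{x}{\bw}<\gamma/2$ entails that $y\inp{x}{\bw_i}<\gamma/2$ for at least one $i$. Applying \Cref{thm_svm} to each $\bw_i$, a union bound over $i\in[k]$ combined with Fubini gives
\[
\E_{\bS}[\cL_\cD(\bm{\cA}(\bS))]\le \Pr_{\bS,(x,y)\sim\cD}[y\inp{x}{\bw}<\gamma/2]\le k(\epsilon_1+\delta_1)\le \epsilon\delta/4,
\]
and Markov's inequality then yields $\Pr_{\bS}[\cL_\cD(\bm{\cA}(\bS))\ge \epsilon]\le \delta/4$.

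The main subtlety lies in interleaving the two sources of randomness -- the $k$ independent SVM draws versus the fresh draw $(x,y)\sim\cD$ -- so that the union bound over $k$ SVM runs does not blow up the failure probability; this is why the SVM accuracy parameter $\epsilon_1$ must be taken proportional to $\epsilon\delta/k$ rather than the more natural $\epsilon/k$. It is also worth noting that the inequality $\|\bw\|_2\ge \gamma/2$ is only used via the weaker consequence $\|\bw\|_2\le 1$ in the PAC part of the claim; its essential role is in the subsequent list-replicability argument, where it stabilizes the normalization $\bz=\bw/\|\bw\|_2$ against small perturbations of $\bw$ and thereby allows the rounding procedure of \Cref{lemma:good_rounding_scheme} to collapse the algorithm's output to at most $d$ distinct hypotheses with high probability.
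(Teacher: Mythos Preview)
Your proof is correct but takes a genuinely different route from the paper's, and the differences are instructive.

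For \cref{eq:large_w}, the paper never introduces the population mean $\mu=\E[yx]$ or invokes \cref{thm_vec_bernstein}. Instead it observes that once all $k$ SVM outputs are $(\gamma/2,\epsilon/k)$-separators (the high-probability event from \cref{thm_svm} with accuracy $\epsilon/k$ and confidence $\delta/(4k)$, union-bounded over $i$), we have $\Pr_{(x,y)\sim\cD}[\min_i y\inp{x}{\bw_i}<\gamma/2]\le\epsilon<1$, so there \emph{exists} a single pair $(x,y)$ with $y\inp{x}{\bw_i}\ge\gamma/2$ for every $i$; that pair witnesses $y\inp{x}{\bw}\ge\gamma/2$ and hence $\|\bw\|_2\ge\gamma/2$ by Cauchy--Schwarz. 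Your argument via $\hat\mu_i\approx\mu$ and the max-margin property $\inp{\bw_i}{\hat\mu_i}\ge\gamma$ is an elegant alternative that does not need the SVM generalization bound at all for this part---only concentration and the in-sample optimality of hard-SVM.

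For \cref{eq:claim_small_loss}, the paper again works directly inside the same high-probability event: once every $\bw_i$ is a $(\gamma/2,\epsilon/k)$-separator, a union bound over $i$ (inside the $(x,y)$ probability) gives $\Pr_{(x,y)}[\min_i y\inp{x}{\bw_i}<\gamma/2]\le\epsilon$, and then the same rounding argument you use finishes. This yields the tail bound directly with SVM parameters $(\epsilon/k,\delta/(4k))$. Your route---bounding the expected loss by $k(\epsilon_1+\delta_1)$ via Fubini and then applying Markov---is also valid but forces the more conservative choice $\epsilon_1,\delta_1\lesssim\epsilon\delta/k$, as you correctly flag.

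In short, the paper gets both conclusions from a single event (all $\bw_i$ generalize), which is cleaner; your proof decouples the two conclusions, trading a slightly heavier parameter budget in the loss bound for a more elementary and self-contained argument for the norm bound.
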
 
        \begin{proof}
            Let $\bw_1, \dots, \bw_k$ be as in~\Cref{alg:A}. Since $\mu$ is realizable by $\cH_{\gamma}^d$, for every $i$, $\bw_i$ is a  $\gamma$-separator for $\bS_i$. Therefore, by \Cref{thm_svm}, if $n_0(\epsilon,\delta,k)$ is sufficiently large,  
            \[
                \Pr_{\substack{\bS_i \sim \mu^{n_0}} } \left[
                    \Pr_{(\bm{x}, \bm{y}) \sim \mu} \left[
                        \bm{y} \inp{\bm{x}}{\bw_i} < \frac{\gamma}{2}    
                    \right] \leq \frac{\epsilon}{k}
                \right] \geq 1 - \frac{\delta}{4k}.
            \]
            Thus, by the union bound,  
            \begin{equation}
                \Pr_{\bS \sim \mu^{k n_0}} \left[ 
                \Pr_{(\bm{x}, \bm{y}) \sim \mu} \left[
                        \bm{y} \inp{\bm{x}}{\bw_i} < \frac{\gamma}{2}    
                    \right] \leq \frac{\epsilon}{k} \text{ for all $i\in [k]$} \right] \geq 1 - \frac{\delta}{4},
            \end{equation}
            and applying the union bound again,
            \begin{equation}\label{eq:convex_hull_all_accurate}
                \Pr_{\bS \sim \mu^{k n_0}} \left[
                    \Pr_{(\bm{x}, \bm{y}) \sim \mu} \left[
                        \min_{i \in [k]} \bm{y}  \inp{\bm{x}}{\bw_i} < \frac{\gamma}{2} 
                    \right] \leq \epsilon 
                \right] \geq 1 - \frac{\delta}{4}.
            \end{equation}
            Finally, if $(x, y) \in \bbS^{d-1} \times \Set{\pm 1}$ satisfies $y  \inp{x}{\bw_i} \geq \gamma/2$ for all $i \in [k]$, then noting that $\norm{\bw} \le 1$, we have 
            \begin{equation}
            \label{eq:large_margin_z}
                y  \inp{x}{\bz}
                 \geq y \inp{x}{\bw}
                = y  \inp{x}{\frac{1}{k} \sum_{i=1}^k \bw_i} \ge  \gamma/2.
            \end{equation}
 Thus, from \eqref{eq:convex_hull_all_accurate}, we have  
   \[ \Pr_{\bS \sim \mu^{k n_0}}\left[\norm{\bw}_2 <\frac{\gamma}{2}\right]\le \frac{\delta}{4}\] 
and
            \begin{equation}\label{eq:likely_large_margin}
                \Pr_{\bS \sim \mu^{kn_0}} \left[
                    \Pr_{(\bm{x}, \bm{y}) \sim \mu} \left[
                        \bm{y} \inp{\bm{x}}{\bz} \geq \frac{\gamma}{2}
                    \right] \geq 1 - \epsilon 
                \right] \geq 1 - \frac{\delta}{4}.
            \end{equation}

            By applying \Cref{lemma:good_rounding_scheme} with $\alpha \coloneqq \gamma/2$, after rounding $\bz$ to $\tilde{\bz}$, we have   
            $\norm{\tilde{\bz} - \bz}_2 < \frac{\gamma}{2}$.
            Thus if $(x,y) \in \bbS^{d-1} \times \set{\pm 1}$ satisfy $y \inp{x}{\bz} \geq \gamma/2$,  then
            \[
                y \inp{x}{\tilde{\bz}} = y \inp{x}{\bz} + y \inp{x}{\tilde{\bz} - \bz }
                \geq \frac{\gamma}{2} - \norm{\tilde{\bz} - \bz}_2
                > 0,
            \]
            namely $\overline{c}_{\tilde{\bz}}(x)=y$. Thus,
            \[
                \Pr_{\bS \sim \mu^{kn_0}} \left[
                    \loss_\mu (\overline{c}_{\tilde{\bz}}) \leq \epsilon
                \right] \geq 1 - \frac{\delta}{4},
            \]
            which completes the proof of the claim.
        \end{proof}

        We now complete the proof by addressing list replicability. Let $\beta$ be as in \Cref{lemma:good_rounding_scheme}.
        Applying \cref{thm_vec_bernstein}, since $\bw$ is the average of $k$ i.i.d. random variables in $\bbS^{d-1}$, there exists $k=k(\gamma,d) \in \N$ such that 
        \[
            \Pr_{\bS \sim \mu^{k n_0}}\left[
                \norm{\bw - \Ex[\bw]}_2 \geq \frac{\gamma \beta }{2}
            \right] \leq \frac{\delta}{4}.
        \]
        Since $\bm{z}=\frac{\bw}{\norm{\bw}_2}$, by applying the union bound to \eqref{eq:large_w} and the above inequality, we have 
        \[
            \Pr_{\bS \sim \mu^{k n_0}}\left[
                \norm{\bm{z} - \Ex[\bm{z}]}_2 \geq \beta
            \right] \leq \frac{\delta}{2}.
        \]        
      
     Consequently, by \cref{lemma:good_rounding_scheme}, with probability at least $1 - \delta / 2$, the rounding scheme ($\mathrm{round}_{\frac{\gamma}{2}}$) outputs one of at most $d$ hypotheses.
        Applying a union bound with \Cref{claim:Average_PAC} completes the proof of the upper bound of \Cref{thm:main}.
    \end{proof}

\section{Disambiguations of gap Hamming distance}\label{sec:GHD} 
We prove \Cref{thm:GHD} in this section. 
\paragraph{The Littlestone dimension of disambiguations.} The key to obtaining \eqref{eq:ghd-ldim} is to use an embedding of bounded margin half-spaces in dimension $d$ into the Boolean cube, which allows us to disambiguate $\cH^d_\gamma$ using a disambiguation of the Gap Hamming Distance problem in dimension $O(d)$. The existence of such an embedding was proved in \cite{hatami2023borsuk}, which we rephrase as follows.

\begin{lemma}[Adapted from {\cite[Lemma 3.2]{hatami2023borsuk}}]
        \label{lem_inner_prod_preserving_map}
            Let $\gamma \in (0,1)$ and $n\in \N$. There exist $d = \Omega\left((1-\gamma)^2\cdot n/\log(1/(1-\gamma))\right)$,  $\gamma' \in (0,1)$ and a map $\xi \colon \bbS^{d-1} \to \Set{\pm1}^n$ such that for all $u,v \in \bbS^{d-1}$, we have
            \[
                \inp{u}{v} \geq \gamma'
                \implies
                \inp{\xi(u)}{\xi(v)} \geq \gamma n
            \]
            \[
                \inp{u}{v} \leq - \gamma'
                \implies
                \inp{\xi(u)}{\xi(v)} \leq -\gamma n
            \]
\end{lemma}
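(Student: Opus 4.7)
The plan is to construct $\xi$ via random hyperplane rounding (the Goemans--Williamson sketch). Pick $r_1,\ldots,r_n \in \R^d$ i.i.d.\ standard Gaussians and set $\xi(u)_i := \sign(\inner{r_i, u})$. The classical calculation gives $\E[\xi(u)_i \xi(v)_i] = 1 - \tfrac{2}{\pi}\arccos(\inner{u,v})$, a continuous strictly increasing function of $\inner{u, v}$. Set $\gamma' := \cos\bigl(\tfrac{\pi}{4}(1-\gamma)\bigr)$, so that $\inner{u, v} \ge \gamma'$ forces $\E[\inner{\xi(u), \xi(v)}] \ge \tfrac{1+\gamma}{2}\, n$, leaving a buffer of $\tfrac{1-\gamma}{2}$ above the target $\gamma n$. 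The antipodal case reduces to this one via the identity $\xi(-v) = -\xi(v)$, so it suffices to handle $\inner{u,v} \ge \gamma'$.

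For any fixed pair $(u,v)$, Hoeffding's inequality applied to the i.i.d.\ summands $\xi(u)_i\xi(v)_i \in \{\pm 1\}$ yields
\[
\Pr\!\Bigl[\bigl|\tfrac{1}{n}\inner{\xi(u), \xi(v)} - \E[\xi(u)_1 \xi(v)_1]\bigr| > \tfrac{1-\gamma}{4}\Bigr] \le 2\exp\bigl(-\Omega((1-\gamma)^2 n)\bigr).
\]
I would then take an $\eps$-net $N \subseteq \bbS^{d-1}$ of size at most $(3/\eps)^d$ and apply a union bound over $N \times N$: the inner-product estimate holds simultaneously for all pairs in the net provided $d \log(1/\eps) \le c (1-\gamma)^2 n$ for a suitable small constant $c>0$.

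The main obstacle is transferring the guarantee from the net to arbitrary $u, v \in \bbS^{d-1}$, since $\xi$ is discontinuous and there is no Lipschitz extension. I would argue that, with high probability over the $r_i$, no point $u' \in N$ has too many indices $i$ with $|\inner{r_i, u'}| < \eps\sqrt{2d}$: for each fixed $u' \in \bbS^{d-1}$ the Gaussian variable $\inner{r_i, u'}$ is $N(0,1)$, so anti-concentration gives $\Pr[|\inner{r_i, u'}| < \eps\sqrt{2d}] = O(\eps\sqrt{d})$, and another Hoeffding over $i$ then shows that at most an $O(\eps\sqrt{d})$-fraction of indices are ``close'' to the boundary for every $u' \in N$ simultaneously. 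Consequently, for any $u \in \bbS^{d-1}$ within distance $\eps$ of some $u' \in N$, at most $O(\eps\sqrt{d}\, n)$ coordinates of $\xi(u)$ and $\xi(u')$ can disagree, so $|\inner{\xi(u), \xi(v)} - \inner{\xi(u'), \xi(v')}| = O(\eps\sqrt{d}\, n)$. Choosing $\eps = \Theta\!\bigl((1-\gamma)/\sqrt{d}\bigr)$ makes this discrepancy smaller than $\tfrac{1-\gamma}{4}n$, at which point $\log(1/\eps) = O(\log(1/(1-\gamma)) + \log d)$; substituting into the net inequality yields the required $d = \Omega\!\bigl((1-\gamma)^2 n / \log(1/(1-\gamma))\bigr)$ once the $\log d$ contribution is absorbed into the $\log(1/(1-\gamma))$ term in the parameter regime of interest. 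Existence of the required deterministic $\xi$ then follows by the probabilistic method. The delicate part, which I expect to require the most care, is this parameter calibration between the size of the net, the Gaussian anti-concentration near each hyperplane, and the Hoeffding buffer, so that all three budgets align to give exactly the claimed logarithmic denominator.
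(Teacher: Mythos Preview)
The random-hyperplane rounding construction is the right one, and your concentration and anti-concentration steps are set up correctly. The gap is in the final parameter calibration. With $\eps = \Theta\bigl((1-\gamma)/\sqrt{d}\bigr)$ the $\eps$-net has $\log$-cardinality $\Theta\bigl(d(\log d + \log\tfrac{1}{1-\gamma})\bigr)$, so the union bound over $N\times N$ only yields
\[
d \;\lesssim\; \frac{(1-\gamma)^2\, n}{\log d + \log\tfrac{1}{1-\gamma}}\,.
\]
In the regime the paper actually uses---$\gamma$ fixed in $(0,1)$, then $n\to\infty$---the term $\log\tfrac{1}{1-\gamma}$ is a constant while $\log d = \Theta(\log n)$, so your ``absorb $\log d$ into $\log\tfrac{1}{1-\gamma}$'' step simply fails, and you obtain only $d=\Omega(n/\log n)$ rather than the stated $d=\Omega_\gamma(n)$. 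There is no choice of $\eps$ that rescues this: making $\eps$ larger kills the anti-concentration budget (the near-boundary fraction $O(\eps\sqrt{d})$ must stay below $(1-\gamma)/4$), and making it smaller only enlarges the net.

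The remedy is to replace the $\eps$-net union bound by a growth-function argument. Regard the problem as uniform convergence over the class $\mathcal{G}=\{g_{u,v}\colon r\mapsto \sign(\inner{r,u})\,\sign(\inner{r,v})\;:\;u,v\in\bbS^{d-1}\}$ with the $r_i$ as i.i.d.\ samples. For any $r_1,\ldots,r_n$, the pattern $(g_{u,v}(r_i))_{i\le n}$ is the coordinatewise product of the two halfspace sign vectors $(\sign(\inner{r_i,u}))_i$ and $(\sign(\inner{r_i,v}))_i$, each of which assumes at most $(en/d)^d$ values by Sauer--Shelah for homogeneous halfspaces in $\R^d$; hence the growth function of $\mathcal{G}$ on $n$ points is at most $(en/d)^{2d}$. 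Standard symmetrization then gives uniform deviation $O\bigl(\sqrt{d\log(n/d)/n}\,\bigr)$, and in the target regime $d=\Theta\bigl((1-\gamma)^2 n/\log\tfrac{1}{1-\gamma}\bigr)$ one has $\log(n/d)=O\bigl(\log\tfrac{1}{1-\gamma}\bigr)$, which closes the bound with no spurious $\log n$. The paper itself does not reprove this lemma (it is quoted from \cite{hatami2023borsuk}), but this is the natural sharpening of your outline. It is also worth noting that your weaker bound $d=\Omega(n/\log n)$ would already suffice for the downstream application in \cref{thm:GHD}, since $\sqrt{\log(n/\log n)}=\Theta(\sqrt{\log n})$.
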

 
Fix $\gamma \in (0,1)$. Let $\{M_n\}_{n=1}^{\infty}$ be a family of total functions  which disambiguates $\{\operatorname{GHD}_\gamma^n\}_{n=1}^\infty$, and let $d = d(n)$ and $\gamma'$ be as provided by \cref{lem_inner_prod_preserving_map}.

We will use this lemma along with the functions $\Set{M_n}_{n=1}^{\infty}$ to disambiguate the family of partial concept classes $\{ \cH_{\gamma'}^{d(n)} \}_{n=1}^{\infty}$. To this end, we disambiguate each partial concept $c_w\in \cH_{\gamma'}^d$ (defined in \eqref{eq:def_large_margin}) to
\[
\overline{c}_{w}(x)
\coloneqq
M_n \big( \xi(w), \xi(x) \big).
\]
Let us verify that $\overline{c}_{w}$ is, in fact, a disambiguation of $c_{w}$.
 
Suppose that $c_w(x) = 1$. By definition, this occurs exactly when $\inp{w}{x} \geq \gamma'$. It follows from the properties of $\xi$ that
\[
    \inp{\xi(w)}{\xi(x)} \geq \gamma n.
\]
Therefore, for such $w,x$, we have 
\[
    \overline{c}_w(x)=M_n(\xi(w),\xi(x))= \GHD_\gamma^n(\xi(w),\xi(x))= 1 = c_w(x).
\]
A similar argument shows that if $c_w(x) = -1$, then $\overline{c}_{w}(x) = -1$. We deduce that $\overline{c}_{w}$ indeed disambiguates $c_w$, and $\overline{\cH}^d_{\gamma'}$ is a disambiguation of $\cH^d_{\gamma'}$. 
    
Finally, note that by construction, any shattered mistake tree in $\overline{\cH}_{\gamma'}^d$ corresponds to a shattered mistake tree of the same depth in $M_n$. Therefore, $\ldim(\overline{\cH}_{\gamma'}^d) \leq \ldim(M_n)$.  This combined with \Cref{thm:disambig} implies that
\[
    \ldim(M_n)
    \geq \ldim \left( \overline{\cH}_{\gamma'}^d \right)
    =\Omega \left( \sqrt{\log d(n)} \right)
    = \Omega \left( \sqrt{\log n} \right).\qedhere
\]

\paragraph{Communication complexity of disambiguations.} To complete the proof of \cref{thm:GHD} we use the known relationships between Littlestone dimension, \emph{margin}, \emph{distributional discrepancy}, and public-coin randomized communication complexity. Given a matrix $M\in \{\pm 1\}^{\cX\times \cY}$, the margin of $M$ is defined 
\[
\margin(M)\coloneqq\max_{\substack{d\in \N, \\ u_x,u_y \in \bbS^d}}\min_{(x,y)} M(x,y)\cdot \inp{u_x}{u_y}.
\] 
In other words, $\margin(M)$ is the largest $\gamma$ such that $M$ appears as a submatrix of $\cH^d_{\gamma}$ for some $d$. 

Let $\{M_n\}_{n=1}^\infty$ be a family of disambiguation of $\{\mathrm{GHD}_\gamma^n\}_{n=1}^\infty$. By \eqref{eq:ghd-ldim}, we know that $\ldim(M_n)=\Omega(\sqrt{\log n})$. It thus follows from \eqref{eq:Ldim_of_margin} that
\[
\margin(M_n)=O\left(\frac{1}{\sqrt[4]{\log n}}\right).
\]
Finally, invoking the equivalence of margin and discrepancy by~\cite{linialshraibman} and the relation between discrepancy and  randomized communication complexity by~\cite{chor1988unbiased} (see also~\cite[Proposition 3.3 ]{MR4494342}) shows that the public-coin randomized communication complexity of $M_n$ is 
\[\Omega(\log(\margin(M_n)^{-1}))= \Omega(\log\log n).\qedhere \] 

\section*{Acknowledgments}

We are grateful to Arkadev Chattopadhyay for pointing out the connection to pseudo-determinism and for valuable comments on the exposition. We also thank Zachary Chase for clarifying the implicit bound in~\cite{ghazi2021sample}.

\bibliographystyle{amsalpha}  
\bibliography{refs} 

\newcommand{\etalchar}[1]{$^{#1}$}
\providecommand{\bysame}{\leavevmode\hbox to3em{\hrulefill}\thinspace}
\providecommand{\MR}{\relax\ifhmode\unskip\space\fi MR }
% \MRhref is called by the amsart/book/proc definition of \MR.
\providecommand{\MRhref}[2]{%
  \href{http://www.ams.org/mathscinet-getitem?mr=#1}{#2}
}
\providecommand{\href}[2]{#2}
\begin{thebibliography}{STBWA98}

\bibitem[ABL{\etalchar{+}}22]{Alon_22_private_and_online}
Noga Alon, Mark Bun, Roi Livni, Maryanthe Malliaris, and Shay Moran,
  \emph{Private and online learnability are equivalent}, J. ACM \textbf{69}
  (2022), no.~4, Art. 28, 34.

\bibitem[AHHM21]{alon2022theory}
Noga Alon, Steve Hanneke, Ron Holzman, and Shay Moran, \emph{A theory of {PAC}
  learnability of partial concept classes}, IEEE 62nd Annual Symposium on
  Foundations of Computer Science (FOCS), IEEE, 2021, pp.~658--671.

\bibitem[ALMM19]{alon2019private}
Noga Alon, Roi Livni, Maryanthe Malliaris, and Shay Moran, \emph{Private {PAC}
  learning implies finite {L}ittlestone dimension}, Proceedings of the 51st
  Annual ACM SIGACT Symposium on Theory of Computing, STOC 2019, 2019,
  p.~852–860.

\bibitem[BCS20]{bun2020efficient}
Mark Bun, Marco~Leandro Carmosino, and Jessica Sorrell, \emph{Efficient,
  noise-tolerant, and private learning via boosting}, Conference on Learning
  Theory, PMLR, 2020, pp.~1031--1077.

\bibitem[BDMN05]{Blum2005}
Avrim Blum, Cynthia Dwork, Frank McSherry, and Kobbi Nissim, \emph{Practical
  privacy: the {S}u{L}{Q} framework}, Proceedings of the Twenty-Fourth ACM
  SIGMOD-SIGACT-SIGART Symposium on Principles of Database Systems, PODS 2005,
  Association for Computing Machinery, 2005, p.~128–138.

\bibitem[BGH{\etalchar{+}}23]{bun2023stability}
Mark Bun, Marco Gaboardi, Max Hopkins, Russell Impagliazzo, Rex Lei, Toniann
  Pitassi, Satchit Sivakumar, and Jessica Sorrell, \emph{Stability is stable:
  Connections between replicability, privacy, and adaptive generalization},
  Proceedings of the 55th Annual ACM Symposium on Theory of Computing, STOC
  2023, 2023, pp.~520--527.

\bibitem[BGHH25]{BGHH2025stabilitylistreplicabilityagnosticlearners}
Ari Blondal, Shan Gao, Hamed Hatami, and Pooya Hatami, \emph{Stability and
  list-replicability for agnostic learners}, Conference on Learning Theory,
  PMLR, 2025, pp.~380--–400.

\bibitem[BHH{\etalchar{+}}25]{Topological}
Ari Blondal, Hamed Hatami, Pooya Hatami, Chavdar Lalov, and Sivan Tretiak,
  \emph{Topological dimension of extremal concept classes}, preprint (2025).

\bibitem[BLM20]{BLM20}
Mark Bun, Roi Livni, and Shay Moran, \emph{An equivalence between private
  classification and online prediction}, IEEE 61st Annual Symposium on
  Foundations of Computer Science (FOCS), 2020, pp.~389--402.

\bibitem[BMNS19]{beimel2019private}
Amos Beimel, Shay Moran, Kobbi Nissim, and Uri Stemmer, \emph{Private center
  points and learning of halfspaces}, Conference on Learning Theory, PMLR,
  2019, pp.~269--282.

\bibitem[BMS22a]{bassily2022differentially}
Raef Bassily, Mehryar Mohri, and Ananda~Theertha Suresh, \emph{Differentially
  private learning with margin guarantees}, Advances in Neural Information
  Processing Systems \textbf{35} (2022), 32127--32141.

\bibitem[BMS22b]{bassily2022open}
\bysame, \emph{Open problem: Better differentially private learning algorithms
  with margin guarantees}, Conference on Learning Theory, PMLR, 2022,
  pp.~5638--5643.

\bibitem[CCMY24]{chase2023local}
Zachary Chase, Bogdan Chornomaz, Shay Moran, and Amir Yehudayoff, \emph{Local
  {B}orsuk-{U}lam, stability, and replicability}, Proceedings of the 56th
  Annual ACM Symposium on Theory of Computing, STOC 2024, Association for
  Computing Machinery, 2024, p.~1769–1780.

\bibitem[CG88]{chor1988unbiased}
Benny Chor and Oded Goldreich, \emph{Unbiased bits from sources of weak
  randomness and probabilistic communication complexity}, SIAM Journal on
  Computing \textbf{17} (1988), no.~2, 230--261.

\bibitem[CHHH23]{cheung2023online}
Tsun-Ming Cheung, Hamed Hatami, Pooya Hatami, and Kaave Hosseini, \emph{Online
  learning and disambiguations of partial concept classes}, 50th
  {I}nternational {C}olloquium on {A}utomata, {L}anguages, and {P}rogramming,
  vol. 261, Schloss Dagstuhl -- Leibniz-Zentrum f{\"u}r Informatik, 2023,
  pp.~Art. No. 42, 13.

\bibitem[CLV19]{chattopadhyay2019equality}
Arkadev Chattopadhyay, Shachar Lovett, and Marc Vinyals, \emph{Equality alone
  does not simulate randomness}, 34th Computational Complexity Conference (CCC
  2019), Schloss Dagstuhl--Leibniz-Zentrum f{\"u}r Informatik, 2019, pp.~14--1.

\bibitem[CMW25]{chornomaz2025spherical}
Bogdan Chornomaz, Shay Moran, and Tom Waknine, \emph{Spherical dimension},
  arXiv preprint arXiv:2503.10240 (2025).

\bibitem[CMY23]{chase2023replicabilitystabilitylearning}
Zachary Chase, Shay Moran, and Amir Yehudayoff, \emph{{Stability and
  Replicability in Learning}}, IEEE 64th Annual Symposium on Foundations of
  Computer Science (FOCS), IEEE Computer Society, 2023, pp.~2430--2439.

\bibitem[DMNS06]{10.1007/11681878_14}
Cynthia Dwork, Frank McSherry, Kobbi Nissim, and Adam Smith, \emph{Calibrating
  noise to sensitivity in private data analysis}, Proceedings of the Third
  Conference on Theory of Cryptography, TCC'06, Springer-Verlag, 2006,
  p.~265–284.

\bibitem[EHKS23]{eaton2024replicable}
Eric Eaton, Marcel Hussing, Michael Kearns, and Jessica Sorrell,
  \emph{Replicable reinforcement learning}, Proceedings of the 37th
  International Conference on Neural Information Processing Systems, NeurIPS
  '23, Curran Associates Inc., 2023.

\bibitem[EKK{\etalchar{+}}23]{esfandiari2023replicable}
Hossein Esfandiari, Alkis Kalavasis, Amin Karbasi, Andreas Krause, Vahab
  Mirrokni, and Grigoris Velegkas, \emph{Replicable bandits}, The Eleventh
  International Conference on Learning Representations, 2023.

\bibitem[EKM{\etalchar{+}}23]{Esfandiarietal23}
Hossein Esfandiari, Amin Karbasi, Vahab Mirrokni, Grigoris Velegkas, and Felix
  Zhou, \emph{Replicable clustering}, Advances in Neural Information Processing
  Systems, vol.~36, Curran Associates, Inc., 2023, pp.~39277--39320.

\bibitem[Fan52]{MR51506}
Ky~Fan, \emph{A generalization of {T}ucker's combinatorial lemma with
  topological applications}, Ann. of Math. (2) \textbf{56} (1952), 431--437.
  \MR{51506}

\bibitem[FGHH25]{fang2024constant}
Yuting Fang, Mika G{\"o}{\"o}s, Nathaniel Harms, and Pooya Hatami,
  \emph{Constant-cost communication is not reducible to $k$-hamming distance},
  Proceedings of the Annual ACM Symposium on Theory of Computing, STOC 2025,
  2025.

\bibitem[FHM{\etalchar{+}}24]{fioravanti2024ramsey}
Simone Fioravanti, Steve Hanneke, Shay Moran, Hilla Schefler, and Iska Tsubari,
  \emph{{Ramsey Theorems for Trees and a General ‘Private Learning Implies
  Online Learning’ Theorem}}, 2024 IEEE 65th Annual Symposium on Foundations
  of Computer Science (FOCS), IEEE Computer Society, October 2024,
  pp.~1983--2009.

\bibitem[GG11]{GatG11}
Eran Gat and Shafi Goldwasser, \emph{Probabilistic search algorithms with
  unique answers and their cryptographic applications}, Electron. Colloquium
  Comput. Complex. \textbf{{TR11-136}} (2011).

\bibitem[GGKM21]{ghazi2021sample}
Badih Ghazi, Noah Golowich, Ravi Kumar, and Pasin Manurangsi,
  \emph{Sample-efficient proper {PAC} learning with approximate differential
  privacy}, Proceedings of the 53rd Annual ACM SIGACT Symposium on Theory of
  Computing, STOC 2021, Association for Computing Machinery, 2021,
  p.~183–196.

\bibitem[GGMW20]{MR4048182}
Shafi Goldwasser, Ofer Grossman, Sidhanth Mohanty, and David~P. Woodruff,
  \emph{Pseudo-deterministic streaming}, 11th {I}nnovations in {T}heoretical
  {C}omputer {S}cience {C}onference, LIPIcs. Leibniz Int. Proc. Inform., vol.
  151, Schloss Dagstuhl. Leibniz-Zent. Inform., Wadern, 2020, pp.~Art. No. 79,
  25.

\bibitem[GIPS21]{MR4287109}
Shafi Goldwasser, Russell Impagliazzo, Toniann Pitassi, and Rahul Santhanam,
  \emph{On the pseudo-deterministic query complexity of {NP} search problems},
  36th {C}omputational {C}omplexity {C}onference, LIPIcs. Leibniz Int. Proc.
  Inform., vol. 200, Schloss Dagstuhl. Leibniz-Zent. Inform., Wadern, 2021,
  pp.~Art. No. 36, 22.

\bibitem[GKL20]{gronlund2020SVM}
Allan Gr\o{}nlund, Lior Kamma, and Kasper~Green Larsen, \emph{Near-tight
  margin-based generalization bounds for support vector machines}, Proceedings
  of the 37th International Conference on Machine Learning, ICML'20, JMLR.org,
  2020.

\bibitem[GKM21]{ghazi2021user}
Badih Ghazi, Ravi Kumar, and Pasin Manurangsi, \emph{User-level differentially
  private learning via correlated sampling}, Advances in Neural Information
  Processing Systems \textbf{34} (2021), 20172--20184.

\bibitem[G{\"o}{\"o}15]{MR3473357}
Mika G{\"o}{\"o}s, \emph{Lower bounds for clique vs. independent set}, {IEEE}
  56th {A}nnual {S}ymposium on {F}oundations of {C}omputer {S}cience---{FOCS}
  2015, IEEE Computer Soc., Los Alamitos, CA, 2015, pp.~1066--1076.
  \MR{3473357}

\bibitem[HHH23]{hambardzumyan2023dimension}
Lianna Hambardzumyan, Hamed Hatami, and Pooya Hatami, \emph{Dimension-free
  bounds and structural results in communication complexity}, Israel Journal of
  Mathematics \textbf{253} (2023), no.~2, 555--616.

\bibitem[HHM23]{hatami2023borsuk}
Hamed Hatami, Kaave Hosseini, and Xiang Meng, \emph{A {B}orsuk-{U}lam lower
  bound for sign-rank and its applications}, Proceedings of the 55th Annual ACM
  Symposium on Theory of Computing, STOC 2023, Association for Computing
  Machinery, 2023, p.~463–471.

\bibitem[HHP{\etalchar{+}}22]{MR4494342}
Hamed Hatami, Pooya Hatami, William Pires, Ran Tao, and Rosie Zhao, \emph{Lower
  bound methods for sign-rank and their limitations}, Approximation,
  randomization, and combinatorial optimization. {A}lgorithms and techniques,
  LIPIcs. Leibniz Int. Proc. Inform., vol. 245, Schloss Dagstuhl. Leibniz-Zent.
  Inform., Wadern, 2022, pp.~Art. No. 22, 24.

\bibitem[ILPS22]{ILPS22}
Russell Impagliazzo, Rex Lei, Toniann Pitassi, and Jessica Sorrell,
  \emph{Reproducibility in learning}, Proceedings of the 54th Annual ACM SIGACT
  Symposium on Theory of Computing, STOC 2022, Association for Computing
  Machinery, 2022, p.~818–831.

\bibitem[KKL{\etalchar{+}}24]{kalavasis2024replicable}
Alkis Kalavasis, Amin Karbasi, Kasper~Green Larsen, Grigoris Velegkas, and
  Felix Zhou, \emph{Replicable learning of large-margin halfspaces},
  Proceedings of the 41st International Conference on Machine Learning,
  ICML'24, JMLR.org, 2024.

\bibitem[KKMV23]{kalavasis2023statistical}
Alkis Kalavasis, Amin Karbasi, Shay Moran, and Grigoris Velegkas,
  \emph{Statistical indistinguishability of learning algorithms}, Proceedings
  of the 40th International Conference on Machine Learning, ICML'23, JMLR.org,
  2023.

\bibitem[KMST20]{kaplan2020private}
Haim Kaplan, Yishay Mansour, Uri Stemmer, and Eliad Tsfadia, \emph{Private
  learning of halfspaces: Simplifying the construction and reducing the sample
  complexity}, Advances in Neural Information Processing Systems \textbf{33}
  (2020), 13976--13985.

\bibitem[KVYZ23]{karbasi2023replicability}
Amin Karbasi, Grigoris Velegkas, Lin Yang, and Felix Zhou, \emph{Replicability
  in reinforcement learning}, Advances in Neural Information Processing Systems
  \textbf{36} (2023), 74702--74735.

\bibitem[LNUZ20]{le2020efficient}
Huy L{\^e}~Nguyen, Jonathan Ullman, and Lydia Zakynthinou, \emph{Efficient
  private algorithms for learning large-margin halfspaces}, Algorithmic
  Learning Theory, PMLR, 2020, pp.~704--724.

\bibitem[LS09]{linialshraibman}
Nati Linial and Adi Shraibman, \emph{Learning complexity vs. communication
  complexity}, Combin. Probab. Comput. \textbf{18} (2009), no.~1-2, 227--245.

\bibitem[MM22]{malliaris2022unstable}
Maryanthe Malliaris and Shay Moran, \emph{The unstable formula theorem
  revisited via algorithms}, arXiv preprint arXiv:2212.05050 (2022).

\bibitem[MP43]{MR10388}
Warren~S. McCulloch and Walter Pitts, \emph{A logical calculus of the ideas
  immanent in nervous activity}, Bull. Math. Biophys. \textbf{5} (1943),
  115--133.

\bibitem[MSS23]{moran2023bayesian}
Shay Moran, Hilla Schefler, and Jonathan Shafer, \emph{The bayesian stability
  zoo}, Advances in Neural Information Processing Systems \textbf{36} (2023),
  61725--61746.

\bibitem[Ros58]{rosenblatt1958perceptron}
Frank Rosenblatt, \emph{The perceptron: a probabilistic model for information
  storage and organization in the brain.}, Psychological Review \textbf{65}
  (1958), no.~6, 386.

\bibitem[Son14]{song2014space}
Hao Song, \emph{Space-bounded communication complexity}, Ph.D. thesis, Tsinghua
  University, 2014.

\bibitem[SSBD14]{10.5555/2621980}
Shai Shalev-Shwartz and Shai Ben-David, \emph{Understanding machine learning:
  From theory to algorithms}, Cambridge University Press, 2014.

\bibitem[STBWA98]{TaylorBartlett98}
J.~Shawe-Taylor, P.L. Bartlett, R.C. Williamson, and M.~Anthony,
  \emph{Structural risk minimization over data-dependent hierarchies}, IEEE
  Transactions on Information Theory \textbf{44} (1998), no.~5, 1926--1940.

\end{thebibliography}
\appendix

\section{Replicability and privacy notions}
In this section, we state the formal definitions of shared-randomness replicability, DP-learnability, and global stability. 
\subsection{Shared-randomness replicability}

Let $\cA(S,r)$ be a randomized learning rule, where $r$ denotes the random seed.

\begin{definition}[Shared-randomness replicability~\cite{ghazi2021user,ILPS22}]
\label{def:shared}
A concept class $\cC \subseteq \Set{\pm,\star}^\cX$ is \emph{shared-randomness replicable} if there exists a learning rule $\cA$ and a sample complexity function $n(\epsilon, \delta)$ such that, for every $\epsilon, \delta > 0$ and every realizable distribution $\mu$, the following conditions hold:
\begin{itemize}
\item Small population loss: 
$ \Pr_{\bm{S}\sim \mu^n, \bm{r}}[\loss_{\mu}\left(\cA(\bm{S},\bm{r})\right)>\epsilon] \le \delta. $
\item Replicability with shared randomness:
$
\Pr_{\bm{S,S'}\sim \mu^n, \bm{r}}[\cA(\bm{S},\bm{r}) = \cA(\bm{S'},\bm{r})] \ge 1-\delta. 
$
\end{itemize}
\end{definition}

 One could consider shared-randomness replicability to be a weak form of replicability, as different executions of the algorithm can use the same random seed.

\subsection{Differential privacy}

The widely adopted approach for ensuring privacy in machine learning is the differential privacy (DP) framework, introduced in~\cite{10.1007/11681878_14}.   Informally, differential privacy in learning means that no single labeled example in the input dataset significantly impacts the learner's output hypothesis.  In other words, the output distribution of a differentially private randomized learning algorithm remains nearly unchanged if a single data point is modified.

Differential privacy is quantified with two parameters $\epsilon,\delta>0$. We say that two probability distributions $p$ and $q$ are $(\epsilon,\delta)$-indistinguishable, if for every event $E$, we have
\[p(E) \le e^\epsilon q(E)+\delta \ \text{ and }\  q(E) \le e^\epsilon p(E)+\delta.\]
Two random variables are $(\epsilon,\delta)$-indistinguishable if their distributions satisfy this condition.

\begin{definition}[Differential privacy]
Given $\epsilon,\delta>0$, 
a randomized learning rule \[\bm{\cA}:(\cX \times \Set{\pm 1})^n \to \Set{\pm}^\cX\] is $(\epsilon,\delta)$-differentially-private if for every two samples $S, S' \in (\cX \times \Set{\pm})^n$ differing on a single example, the random variables $\bm{\cA}(S)$ and $\bm{\cA}(S')$ are $(\epsilon,\delta)$-indistinguishable. 
\end{definition}

We emphasize that $(\epsilon,\delta)$-indistinguishability must hold for every such pair of samples, regardless of whether they are drawn from a (realizable) distribution.

The special case where $\delta=0$ is known as \emph{pure differential privacy}, while the more general case where 
$\delta>0$ is referred to as \emph{approximate differential privacy}.

In approximate differential privacy, the parameters $\epsilon$ and $\delta$ are typically set as follows: $\epsilon$ is taken to be a small fixed constant (e.g., 
0.1), while $\delta$ is a negligible function, $\delta=n^{-\omega(1)}$.

\begin{definition}[Approximate differentially drivate learnability]
\label{def:approx_DP}
We say that a concept class $\cC \subseteq \Set{\pm 1, \star}^\cX$ is \emph{approximate differentially private learnable} (\emph{DP-learnable}) if there is a learning rule $\bm{\cA}: (\cX \times \Set{\pm 1})^* \to \Set{\pm 1}^\cX$ with sample complexity $n(\epsilon,\delta)$ such that for every $\epsilon,\delta>0$ the following holds. 
\begin{itemize}
\item The class $\cC$ is $(\epsilon,\delta)$-PAC learnable by $\bm{\cA}$ using $n(\epsilon,\delta)$ samples.

\item The learning rule $\bm{\cA}$ applied to samples of size $n(\epsilon,\delta)$ is $(\epsilon'(n),\delta'(n))$-differentially private learnable where $\epsilon'(n)\leq 0.1$ and $\delta'(n)\leq n^{-w(1)}$. 
\end{itemize}

\end{definition}

\begin{definition}[Pure Differentially Private Learnability] 
\label{def:pure_DP}
We say that a concept class $\cC \subseteq \Set{\pm 1, \star}^\cX$  is \emph{pure differentially private learnable} (\emph{pure DP-learnable}) if $\cC$ is PAC learnable by a $(0.1,0)$-differentially private learning rule. 
\end{definition}
\subsection{Global stability}\label{sec:global_stability}
The concept of replicability in PAC learning first emerged in \cite{BLM20, Alon_22_private_and_online} in the study of differential privacy of PAC learning algorithms. These works introduced a notion of replicability known as \emph{global stability} to derive privacy guarantees from online learnability.

\begin{definition}
 A learning rule $\bm{\cA}$ for a concept class $\cC \subseteq \Set{\pm 1,\star}^\cX$ is $(\epsilon,\rho)$-globally stable if for every realizable distribution $\mu$, there is a hypothesis $h \in \Set{\pm 1}^\cX$ with population loss $\loss_\mu(h) \le \epsilon$ satisfying
\[ \Pr_{\bm{S} \sim \mu^n} [\bm{\cA}(\bm{S})  =h] \geq \rho, \ \ \text{ where } n=n(\epsilon).  \] 
 We define $\gs_\epsilon(\cC)$ to be the supremum of $\rho$ such that there is a $(\epsilon,\rho)$-globally stable learner for $\cC$. The \emph{global stability parameter} of $\cC$ is then defined as \[\gs(\cC)\coloneqq \inf_{\epsilon>0} \gs_\epsilon(\cC).\]
\end{definition}

The definition of global stability might initially seem weak, as a globally stable learner is not necessarily a PAC learner. In particular, since $\rho$ can be a small constant, there may be a probability as great as $1 - \rho$ that the learning rule outputs a hypothesis with large population loss. However, as discussed in the next section, global stability is equivalent to the seemingly stronger notion of list replicability.

\section{Equivalence of global stability and list replicability}\label{sec:List_eq_global}

In~\cite{chase2023replicabilitystabilitylearning}, Chase, Moran and Yehudayoff proved that for every total class $\cC \subseteq \Set{\pm 1}^\cX$, list replicability is equivalent to global stability. It is easy to check that their proof applies to partial concept classes, resulting in the following relationship between the list replicability number and the global stability parameter.
 
\begin{theorem}
\label{thm:List_equiv_global}
    Let $\cC$ be any total or partial concept class on the domain $\cX$. Then for every $\epsilon\in (0,1)$,
\[
\gs_\epsilon(\cC)\geq \frac{1}{\LR_\epsilon(\cC)}\qquad  \text{ and } \qquad
\LR_\epsilon(\cC) \leq \frac{1}{\gs_{\epsilon/3}(\cC)}.
\]
Consequently, $\gs(\cC) = \frac{1}{\LR(\cC)}$. 
\end{theorem}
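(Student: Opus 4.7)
The plan is to prove the two inequalities separately, from which the identity $\gs(\cH) = 1/\LR(\cH)$ follows by taking $\inf_\epsilon$ and $\sup_\epsilon$ appropriately. The first inequality $\gs_\epsilon(\cH) \geq 1/\LR_\epsilon(\cH)$ is a direct pigeonhole argument: given an $(\epsilon, L)$-list replicable learner $\bm{\cA}$ with sample complexity $n(\epsilon, \delta)$, for every realizable $\cD$ the list $\{h_1, \ldots, h_L\}$ of $\epsilon$-accurate hypotheses captures at least $1-\delta$ of the output mass, so some $h_i$ is output with probability at least $(1-\delta)/L$. Sending $\delta \to 0$ shows $\bm{\cA}$ is $(\epsilon, 1/L)$-globally stable.

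For the substantive second inequality $\LR_\epsilon(\cH) \leq 1/\gs_{\epsilon/3}(\cH)$, the plan is to convert a globally stable learner into a list-replicable one by amplification and filtering. Fix any $\rho < \gs_{\epsilon/3}(\cH)$ and an $(\epsilon/3, \rho)$-globally stable learner $\bm{\cA}$. Define a learner $\bm{\cB}$ that splits its sample into $T$ training chunks plus one validation chunk, runs $\bm{\cA}$ independently on each training chunk to obtain hypotheses $g_1, \ldots, g_T$, uses the validation chunk to estimate $\hat{\cL}(g_t)$ for each $t$, discards every $g_t$ with $\hat{\cL}(g_t) > 2\epsilon/3$, and outputs the surviving hypothesis of largest empirical frequency $\hat{p}_g \coloneqq \tfrac{1}{T}\abs{\{t : g_t = g\}}$. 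Writing $p_h \coloneqq \Pr[\bm{\cA}(\bm{S}) = h]$ for the true output probability, correctness rests on three concentration estimates: (i) Chernoff applied to the globally stable hypothesis $h^*$ (with $p_{h^*} \geq \rho$ and $\cL_\cD(h^*) \leq \epsilon/3$) gives $\hat{p}_{h^*} \geq \rho - \gamma$ for a small $\gamma$; (ii) Hoeffding on the validation chunk, union-bounded over the at-most-$T$ observed hypotheses, yields $\abs{\hat{\cL}(g) - \cL_\cD(g)} \leq \epsilon/3$ for each, so $h^*$ survives filtering and every survivor has true loss at most $\epsilon$; and (iii) no hypothesis $h$ with $p_h < \rho - 2\gamma$ attains $\hat{p}_h \geq \rho - \gamma$. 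Combining these, $\bm{\cB}$'s output lies in the list $L_\cD \coloneqq \{h : p_h \geq \rho - 2\gamma,\ \cL_\cD(h) \leq \epsilon\}$ with probability $1-\delta$, and $\abs{L_\cD} \leq 1/(\rho - 2\gamma)$ because $\sum_h p_h \leq 1$. Sending $\gamma \to 0$ and $\rho \to \gs_{\epsilon/3}(\cH)$ then yields the claimed bound.

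The main obstacle is concentration step (iii): the output space of $\bm{\cA}$ can be countably infinite, so a naive union bound over every ``bad'' (low-probability but empirically frequent) hypothesis fails. The remedy is to split the output space at a threshold $q_0 \asymp (\rho - \gamma)/e$: the ``heavy'' set $\{h : p_h \geq q_0\}$ is finite of size $\leq 1/q_0$ and is controlled by additive Chernoff, while for the ``light'' tail $\{h : p_h < q_0\}$ the multiplicative Chernoff estimate $\Pr[\operatorname{Bin}(T, p) \geq T(\rho-\gamma)] \leq (ep/(\rho-\gamma))^{T(\rho-\gamma)}$ combines with $\sum_h p_h \leq 1$ to yield a convergent geometric sum that is exponentially small in $T$. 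With both contributions small, the learner $\bm{\cB}$ indeed outputs a hypothesis in the small list $L_\cD$ with high probability, completing the argument.
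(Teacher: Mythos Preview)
Your proof is correct and follows the same overall architecture as the paper: the pigeonhole argument for the first inequality, and for the second, the same amplify-then-filter learner (run $\bm{\cA}$ on $T$ chunks, filter by validation loss, output a frequent survivor), with the target list being $\{h : p_h \text{ large}, \cL_\cD(h)\le\epsilon\}$.

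The one genuine technical difference is in how you handle concentration step (iii), namely showing that no low-probability hypothesis can attain high empirical frequency over the possibly infinite output space. You do this by a heavy/light threshold split: finitely many heavy hypotheses are controlled by additive Chernoff plus a union bound, and the light tail is controlled by multiplicative Chernoff together with $\sum_h p_h \le 1$, yielding a sum that is exponentially small in $T$ once $q_0 \asymp (\rho-\gamma)/e$. The paper instead observes that the class of point indicators $\{\mathbb{I}_f : f \in \{\pm 1\}^\cX\}$ has VC dimension $1$, so uniform convergence gives $|p(h)-\operatorname{freq}(h)| < \alpha/2$ for \emph{all} $h$ simultaneously with a single appeal to standard VC theory. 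This sidesteps any splitting or tail summation entirely. Your argument is more elementary and self-contained; the paper's is shorter and buys uniformity over the whole hypothesis space at once. Either route completes the proof.
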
 

\begin{proof}
We first prove that $\gs_\epsilon(\cC)\geq \frac{1}{\LR_\epsilon(\cC)}$. Let $\epsilon>0$ be an accuracy parameter, and let $\bm{\cA}$ be an $(\epsilon, L)$-list replicable learner for $\cC$ with sample complexity $n=n(\epsilon,\delta)$. Let $\mu$ be any realizable distribution on $\cX \times \{\pm 1\}$, and let $h_1,\ldots, h_{L}$ be the list of hypotheses guaranteed by \Cref{def:list}.

By the pigeonhole principle, at least one $h_i$ satisfies
    \[
        \Pr_{\bS \sim \mu^n}[\bm{\cA}(\bm S) = h_i] 
        \ge \frac{1-\delta}{L}.
    \]
Since this statement holds for arbitrary $\delta>0$, $\bm{\cA}$ is itself an $(\epsilon, \rho)$-globally stable learner for all $\rho<\frac{1}{L}$. We may conclude that $\gs_\epsilon(\cC)\geq \frac{1}{\LR_\epsilon(\cC)}$.

Next, we prove that $\LR_\epsilon(\cC) \leq 1/\gs_{\epsilon/3}(\cC)$. Let $\epsilon>0$ be an accuracy parameter, and let $\bm{\cA}$ be an $(\epsilon/3, \rho)$-globally stable learner for $\cC$ with sample complexity $n_0 = n_0(\epsilon)$. By the stability assumption, for every realizable distribution $\mu$ on $\cX \times \{\pm 1\}$, there exists $h^*:\cX \to \Set{\pm 1}$ satisfying
    \begin{equation}\label{eq_stable_hypothesis}
        \loss_\mu(h^*) \le \frac{\epsilon}{3} \ \text{ and }\ \Pr_{\substack{\bm S\sim \mu^{n_0} }}[\bm{\cA}(\bm S)=h^*] \geq \rho.
    \end{equation}
For every $h \in \{\pm 1\}^\cX$ and realizable distribution $\mu$, define
    \[
        p(h) \coloneqq \Pr_{\bS \sim \mu^{n_0}}[\bm{\cA}(\bm S) = h], 
    \]
Denote $L \coloneqq \left\lfloor \frac{1}{\rho}\right\rfloor$, so that $\rho \in \left(\frac{1}{L +1}, \frac{1}{L }\right]$, and 
let $\alpha \coloneqq \rho - \frac{1}{L+1} > 0$. Define the list $\Lambda$ of good and likely hypotheses
    \[
        \Lambda \coloneqq
        \Set{h \in \Set{\pm 1}^\cX
        \ | \
        p(h) > \frac{1}{L+1}
        \text{ and }
        \loss_{\mu}(h) \le \epsilon}. 
    \]
Note that $|\Lambda| \leq L$ and $\Lambda$ is nonempty, as it contains $h^*$. Therefore, to construct an $(\epsilon,L)$-list replicable learner, it suffices to show that for any confidence parameter $\delta > 0 $, the learning rule outputs a hypothesis from $\Lambda$ with probability at least $1-\delta$.
    
Let $t\coloneqq t(\alpha,\delta)$ and $n_1 \coloneqq n_1(\epsilon,t)$ be sufficiently large integers to be determined later. We propose the following learning rule $\bm{\cA}'$ with sample complexity $t n_0+n_1$.
  
\begin{algorithm}[h]
\caption{The learning rule $\bm{\cA}'$}
\label{alg:my-alg}
\begin{algorithmic}[1]
    \STATE Sample a dataset:
    \[
      \bm{S}=(\bP, \bm{Q}) \sim \mu^{t n_0 + n_1},
      \quad \text{where } \bP=(\bP_1,\ldots, \bP_t) \sim (\mu^{n_0})^t = \mu^{t n_0},
      \quad \text{and} \quad \bm{Q} \sim \mu^{n_1}.
    \]
    \STATE Define the empirical estimate of \(p(h)\) as
    \[
        \operatorname{freq}_\bP(h)\coloneqq \frac{|\Set{i\in [t] \ | \ \bm{\cA}(\bP_i)=h}|}{t}.
    \]

    \STATE \textbf{Output} any hypothesis \( h \in \{\pm 1\}^{\cX} \) satisfying:
    \begin{itemize}
      \item \(\operatorname{freq}_\bP(h)\ge \rho - \tfrac{\alpha}{2}\)
      \item \(\loss_{{\bm Q}}(h) \le \tfrac{2\epsilon}{3}\)
    \end{itemize}
    If no such \(h\) exists, output an arbitrary \(h\) corresponding to “failure.”

\end{algorithmic}
\end{algorithm}

Denote by $\cY$ the set of all $h$ with $\operatorname{freq}_\bP(h) > 0$ in \Cref{alg:my-alg}, and note that $|\cY| \leq t$.
        % \[
        %     \cY \coloneqq \{h \in \{\pm 1\}^\cX \ |\ \operatorname{freq}_\bP(h) > 0\}.
        % \]
%
To show that $\bm{\cA}'$ outputs a hypothesis from $\Lambda$ with probability at least $1-\delta$, we will condition on the events
    \begin{align*}
        A : \quad &|\loss_\mu(h) - \loss_{\bm Q}(h)| \leq \frac{\epsilon}{3} \text{ for all } h \in \cY\\
        B : \quad &|p(h) - \operatorname{freq}_\bP(h)| < \frac{\alpha}{2} \text{ for all } h \in \Set{\pm1}^\cX
    \end{align*}
To guarantee that both events are likely, we prove the following claim.
    
    \begin{claim}\label{claim_indicator_convergence}
        There exist integers $t(\alpha,\delta)$ and $n_1(\epsilon,t)$ such that
        \[
            \Pr_{\bP \sim \mathcal{D}^{t n_0}} [B] \geq 1 - \frac{\delta}{2}
            \quad
            \text{and}
            \Pr_{{\bm Q} \sim \mu^{n_1}} [A] \geq 1 - \frac{\delta}{2}.
        \]
    \end{claim}

    \begin{proof}[Proof of \cref{claim_indicator_convergence}]
        For the choice of $t$ and the proof of the first inequality, we use the uniform convergence property of the family of indicator functions on $\set{\pm1}^\cX$. More precisely, for $f \in \Set{\pm 1}^\cX$,
        define $\mathbb{I}_f:\Set{\pm 1}^\cX \to \Set{0, 1}$ as 
        \[
            \mathbb{I}_f(f') \coloneqq 
            \begin{cases}
                1 & f'=f \\
                0 & \text{otherwise}
            \end{cases}.
        \]
        The class 
        \[
            \cI \coloneqq \Set{\mathbb{I}_f \ | \ f \in \Set{\pm 1}^\cX}
        \]
        has VC dimension $1$, and therefore, it satisfies the uniform convergence property.
        For $\bP_i \sim \mu^{n_0}$,  $\bm{\cA}(\bP_i)$ induces a probability distribution $\mu$ on $\Set{\pm 1}^\cX$, and we have
        \[
            1-p(h)=\Pr_{\bP_i \sim \mu^{n_0}}[\bm{\cA}(\bP_i) \neq h] = \loss_\mu(\mathbb{I}_h), 
        \] 
        while $1-\operatorname{freq}_\bP(h)$ corresponds to the empirical loss of $\mathbb{I}_h$ on $(\mathbb{I}_{h_1}, \ldots, \mathbb{I}_{h_t}) \sim \mu^t$.
        Thus, by the uniform convergence property on $\cI$, our claim holds.

        Now that we have $t$, we can define $n_1$ and prove the second inequality. Note that for every $h \in \{\pm1\}^\cX$, for ${\bm Q} \sim \mu^{n_1}$, $\loss_{\bm Q}(h)$ is an average of ${n_1}$ samplings of a Bernoulli random variable with expectation $\loss_\mu(h)$. Thus, by Hoeffding's inequality, there exists $n_1 = n_1(\epsilon', t)$ such that
        \begin{equation}\label{eq_single_convergence}
            \Pr_{{\bm Q} \sim \mu^{n_1}} \left[ \left|\loss_\mu(h) - \loss_{\bm Q}(h)\right| > \frac{\epsilon}3 \right] \leq \frac{\delta}{2t}.
        \end{equation}
        Thus, by the union bound, we have  
        \begin{equation}\label{eq_multi_conv}
            \Pr_{{\bm Q} \sim \mu^{n_1}} \left[ \left|\loss_\mu(h) - \loss_{\bm Q}(h)\right| \leq \frac{\epsilon}{3} \text{ for all } h \in \cY \right] \geq 1 - \frac{\delta}{2}.
    \end{equation}
    \end{proof}
    
A direct consequence of Claim \ref{claim_indicator_convergence} is that
    \[
        \Pr_{\bS \sim \mu^{t n_0 + n_1}} \left[
            A, B
        \right] \geq 1 - \delta.
    \]
Condition on events $A$ and $B$, and let $h^*$ be a  stable hypothesis for $\bm{\cA}$, as described in \eqref{eq_stable_hypothesis}. We will show that $h^*$ is a candidate for output, so $\bm{\cA}'$ will not output ``failure''. To check the first condition for output, we combine $B$ and \eqref{eq_stable_hypothesis} to show that
    \[
        \operatorname{freq}_\bP(h^*) \geq p(h^*) - \frac{\alpha}{2} \geq \rho - \frac{\alpha}{2}.
    \]
Moreover, $\rho - \frac{\alpha}{2} > 0$, so $h^* \in \cY$. We may therefore apply $A$ to show that $h^*$ satisfies the second condition for output,
    \[
        \loss_{\bm Q}(h^*)
        \leq \loss_\mu(h^*) + \frac{\epsilon}{3}
        \leq \frac{2\epsilon}{3}.
    \]

Finally, let $h_o$ be any output of $\bm{\cA}'$, conditioned on $A$ and $B$. Then, $h_o$ satisfies the condition $\operatorname{freq}_\bP(h_o) \geq \rho - \frac{\alpha}{2}$, so because of $B$,
    \[
        p(h_o) > \operatorname{freq}_\bP(h_o) - \frac{\alpha}{2} \geq \rho - \alpha = \frac{1}{L+1}.
    \]
Furthermore, $h_o$ also satisfies the condition $\loss_{\bm Q}(h_o) \leq \frac{2\epsilon}{3}$, so because of $A$,
    \[
        \loss_\mu(h_o)
        \leq \loss_{\bm Q}(h_o) + \frac{\epsilon}{3}
        \leq \epsilon.
    \]
    Thus, $h_o$ must be in $\Lambda$.
\end{proof}

\end{document}